\documentclass[11pt]{article}

\usepackage[preprint]{acl}

\usepackage{times}         
\usepackage{latexsym}
\usepackage[T1]{fontenc}
\usepackage[utf8]{inputenc}
\usepackage{microtype}
\usepackage{amsmath}
\usepackage{amsthm}
\usepackage{amssymb}
\usepackage{booktabs}

\usepackage{latexsym}
\usepackage[T1]{fontenc}
\usepackage[utf8]{inputenc}
\usepackage{microtype}
\usepackage{inconsolata}

\usepackage{amsmath}
\usepackage{amsfonts}
\usepackage{amssymb}
\usepackage{amsthm}
\usepackage{mathtools}
\usepackage{nicefrac}

\newtheorem{theorem}{Theorem}[section]
\newtheorem{corollary}{Corollary}
\newtheorem{lemma}{Lemma}
\theoremstyle{definition}
\newtheorem{definition}{Definition}[section]
\theoremstyle{remark}
\newtheorem*{remark}{Remark}

\usepackage{booktabs}
\usepackage{multirow}
\usepackage{graphicx}
\usepackage{caption}
\usepackage{subcaption}
\usepackage{array}
\usepackage{float}
\usepackage{url}

\theoremstyle{definition}

\usepackage{tikz}
\usetikzlibrary{shapes,arrows,positioning,calc,decorations.pathreplacing}

\tikzstyle{block} = [rectangle, draw, fill=blue!10, 
    text width=5em, text centered, rounded corners, minimum height=2.5em, font=\scriptsize]
\tikzstyle{reward} = [rectangle, draw, fill=green!20, 
    text width=5.5em, text centered, rounded corners, minimum height=2em, font=\scriptsize]
\tikzstyle{policy} = [rectangle, draw, fill=orange!20, 
    text width=4em, text centered, rounded corners, minimum height=1.8em, font=\scriptsize]
\tikzstyle{arrow} = [thick,->,>=stealth]
\tikzstyle{bidarrow} = [thick,<->,>=stealth,red]
\tikzstyle{gradient_arrow} = [thick,<-,>=stealth]
\tikzstyle{query} = [rectangle, draw, fill=green!15, 
    minimum width=0.8cm, minimum height=1cm, rounded corners, font=\small]
\tikzstyle{sample} = [rectangle, draw, fill=blue!15, 
    minimum width=3cm, minimum height=1cm, rounded corners, font=\small]
\tikzstyle{decision} = [rectangle, draw, fill=orange!15, 
    minimum width=0.8cm, minimum height=1cm, rounded corners, font=\small]


\title{The Two-Stage Decision-Sampling Hypothesis: Understanding the Emergence of Self-Reflection in RL-Trained LLMs}

%

\author{
  \textbf{Zibo Zhao\textsuperscript{1}},
  \textbf{Haipeng Zhang\textsuperscript{2,*}},
  \textbf{Yuanting Zha\textsuperscript{2}},
  \textbf{Xingcheng Xu\textsuperscript{3,*}}
\\
\\
  \textsuperscript{1}Arizona State University,
  \textsuperscript{2}Shanghaitech University,
  \textsuperscript{3}Shanghai Artificial Intelligence Laboratory
\\
  \texttt{zzhao203@asu.edu}, \texttt{zhanghp@shanghaitech.edu.cn} \\
  \texttt{zhayt2022@shanghaitech.edu.cn}, \texttt{xingcheng.xu18@gmail.com}
}

\tikzstyle{block} = [rectangle, draw, fill=blue!10, 
    text width=5em, text centered, rounded corners, minimum height=2.5em, font=\scriptsize]
\tikzstyle{reward} = [rectangle, draw, fill=green!20, 
    text width=5.5em, text centered, rounded corners, minimum height=2em, font=\scriptsize]
\tikzstyle{policy} = [rectangle, draw, fill=orange!20, 
    text width=4em, text centered, rounded corners, minimum height=1.8em, font=\scriptsize]
\tikzstyle{arrow} = [thick,->,>=stealth]
\tikzstyle{bidarrow} = [thick,<->,>=stealth,red]
\tikzstyle{gradient_arrow} = [thick,<-,>=stealth]
\tikzstyle{query} = [rectangle, draw, fill=green!15, 
    minimum width=0.8cm, minimum height=1cm, rounded corners, font=\small]
\tikzstyle{sample} = [rectangle, draw, fill=blue!15, 
    minimum width=3cm, minimum height=1cm, rounded corners, font=\small]
\tikzstyle{decision} = [rectangle, draw, fill=orange!15, 
    minimum width=0.8cm, minimum height=1cm, rounded corners, font=\small]

\begin{document}

\maketitle
\begingroup
  \def\thefootnote{*}\footnotetext{Corresponding authors.}
\endgroup
\begin{abstract}

Self-reflection capabilities emerge in Large Language Models after RL post-training, with multi-turn RL achieving substantial gains over SFT counterparts. Yet the mechanism of how a unified optimization objective gives rise to functionally distinct capabilities of generating solutions and evaluating when to revise them remains opaque. To address this question, we introduce the Gradient Attribution Property to characterize how reward gradients distribute across policy components, formalized through the Two-Stage Decision-Sampling (DS) Hypothesis, which decomposes the policy into sampling ($\pi_{sample}$) for generation and decision ($\pi_{d}$) for verification. We prove that surrogate rewards exhibit Balanced Gradient Attribution, while SFT and KL penalties exhibit Unbalanced Gradient Attribution, with length-weighting creating asymmetric regularization that constrains $\pi_{sample}$ while leaving $\pi_{d}$ under-optimized, providing a theoretical explanation of why RL succeeds where SFT fails. We also empirically validate our theoretical predictions on arithmetic reasoning, demonstrating that RL's superior generalization stems primarily from improved decision-making ($\pi_{d}$) rather than sampling capabilities, providing a first-principles mechanistic explanation for self-correction in thinking models.

\end{abstract}

\section{Introduction}

The ability to verify reasoning, detect errors, and revise incorrect answers (self-reflection capabilities) emerge spontaneously in large language models after RL post-training \citep{deepseek2025r1, openai2024openaio1card, bercovich2025llamanemotronefficientreasoningmodels, zhao2024marcoo1openreasoningmodels}. This emergent behavior correlates strongly with substantial performance improvements, particularly in mathematical reasoning. Yet despite widely documented gains, the mechanism by which RL produces these qualitatively different capabilities remains theoretically opaque. It is unclear how a unified reinforcement learning objective gives rise to functionally distinct abilities of generating candidate solutions versus evaluating when to accept or revise them and why RL succeeds where supervised fine-tuning (SFT) consistently fails.

While existing literature extensively records that the emergence of verification and revision behaviors occurs, the literature to the date lacks an anatomical explanation of how the training objective fundamentally alters the model's policy. Specifically, it remains unclear which optimization processes drive the functional separation required for self-correction. To bridge this gap, we must examine the underlying gradient dynamics that govern this behavioral bifurcation.

A critical preliminary distinction sharpens the puzzle. Both RL-trained and SFT-trained models can produce verification-like token sequences (phrases such as ``Let me recheck'' or ``I made a mistake''), yet only RL-trained models exhibit \textit{functional} verification: actually changing answers when wrong and retaining them when correct. \cite{huang2024llm} and \cite{kamoi2024critical} document that intrinsic self-correction in SFT models is largely illusory; \cite{kang2025trymattersrevisitingrole} test this directly by varying the proportion of corrective transitions in SFT training data and find that $p(F\!\to\!T)$ remains flat at 3--5\% regardless of data composition (Table~\ref{tab:kang2025}). The models produce reflective text fluently, but this text does not lead to actual error correction. The question, therefore, is not whether a model generates reflective tokens, but whether reflection leads to discriminative action, a distinction our framework formalizes through the decomposition below.

In this paper, we view learning to self-reflect as learning to multitask with a shared policy function. Under this framework, we introduce the Gradient Attribution Property to characterize how reward gradients distribute across policy components. We formalize this through the Two-Stage Decision-Sampling (DS) Hypothesis, which decomposes the model's unified policy into a sampling policy $\pi_{sample}$ for content generation and a decision policy $\pi_d$ for verification and stopping. When a model develops self-reflection, it primarily learns to improve $\pi_d$( judgment about when to trust versus revise outputs) rather than (or in addition to) improving $\pi_{sample}$. Transforming the question about "emergent self-reflection" into a question about gradient flow: how does the RL objective differentially update these policy components?

We prove that different reward structures induce fundamentally different learning dynamics. Surrogate rewards exhibit balanced gradient attribution: variations in the reward signal map cleanly to whichever policy component was responsible, creating symmetric learning pressure on both $\pi_{sample}$ and $\pi_d$. In contrast, KL-divergence penalties exhibit imbalanced gradient attribution: length-weighting in token-level calculations creates asymmetric regularization that heavily constrains $\pi_{sample}$ while leaving $\pi_d$ relatively unconstrained. This explains why RL succeeds, its objective mathematically favors learning better $\pi_d$, while SFT, which resembling a KL-divergence objective without countervailing reward, systematically fails to develop genuine self-reflection.

We make several contributions. First, we formalize the DS-Hypothesis and mathematically characterize when gradient signals can be cleanly attributed to specific policy components. Second, we provide a first-principles mechanistic account of why RL algorithms like GRPO induce self-reflection while SFT does not. Third, we empirically validate the framework on arithmetic reasoning tasks, demonstrating that: (i) the framework accurately predicts model performance; (ii) RL improves $\pi_d$ more than $\pi_{sample}$; and (iii) out-of-distribution generalization is primarily limited by $\pi_{sample}$. Finally, we explain documented phenomena including the "echoing effect"\footnote{\cite{kang2025trymattersrevisitingrole} found that SFT-trained thinking models never discriminate between correct and incorrect answers; post-reflection answers largely echo pre-reflection ones.} and why reflection-rich SFT data improves first-answer accuracy without enabling genuine self-correction.

The remainder proceeds as follows. Section 2 reviews policy gradient methods, mathematical reasoning generalization, and the evolution of self-correction. Section 3 develops the theoretical framework. Section 4 presents empirical validation. Section 5 applies insights to explain SFT's limitations. Section 6 concludes.

\section{Related Work}\label{sec:related_work}

Our work lies at the intersection of three research streams: the control-theoretic foundations of policy gradients, mathematical generalization in Transformers, and the evolution of self-correction from prompting to learned RL behaviors. To develop the Gradient Attribution Property framework: we borrow analytical tools for decomposing reward gradients from policy gradient theory; we adopt arithmetic tasks as a controlled testbed from the mathematical reasoning literature. Then we take the empirical puzzle from the self-correction literature to motivate the setting of the DS-Hypothesis and showed that our theoretical framework can theoretically explain these puzzles. Our contribution is showing that gradient attribution properties of training objectives determine whether models learn genuine decision-making or merely imitate reflective patterns. We review each literature stream below.


\subsection{Foundations and Practices of Policy Optimization Methods}

The transition from supervised imitation to reinforcement learning in reasoning models rests fundamentally on policy gradient theory. \cite{sutton1999policy} introduced \textit{options}\footnote{Sutton's seminal work temporally extended actions within semi-Markov decision processes.} providing theoretical justification for treating multi-token Chain of Thought as optimizable sequences rather than mere predictions; \cite{agarwal2020theorypolicygradientmethods} reviews the theory and convergence properties.\footnote{We do not address convergence here; this motivates our single-dimension demonstration.} Policy gradient methods have been widely applied in LLM post-training \citep{ouyang2022traininglanguagemodelsfollow, stiennon2020learning}, with prominent algorithms including TRPO \citep{schulman2017trustregionpolicyoptimization}, PPO \citep{schulman2017proximalpolicyoptimizationalgorithms}, and DPO \citep{rafailov2024directpreferenceoptimizationlanguage}.\footnote{TRPO constrains updates via KL-divergence; PPO simplifies this through clipping and has become standard for RLHF.} Most relevant to our setting, \cite{shao2024deepseekmath} introduced GRPO, which eliminates value networks by normalizing rewards within group samples, enabling thinking models with emergent self-reflection \citep{deepseek2025r1}.\footnote{Our theoretical analysis abstracts away clipping and trust-region constraints to preserve tractability. Although GRPO is closest to our setting, our framework generalizes across policy-based algorithms rather than formalizing any specific one.}

\subsection{Mathematical Reasoning and Generalization}

Arithmetic reasoning serves as a rigorous testbed for Transformer generalization. \cite{nogueira2021investigating} and \cite{anil2022exploring} established that standard Transformers, despite scale, exhibit catastrophic length extrapolation failures, suggesting they learn surface heuristics rather than robust algorithms. \cite{lee2023teaching} challenged the necessity of scale, showing small models can master arithmetic via data formatting (scratchpads, reverse-order generation), with sharp phase transitions indicative of "grokking." \cite{xu2025principled} formalized these mechanics through a unified framework linking generalization to architecture-task symmetry alignment. Beyond simple arithmetic, benchmarks like GSM8K, MATH, and MetaMathQA are widely used to measure LLM reasoning ability \citep{cobbe2021trainingverifierssolvemath, hendrycks2021measuringmathematicalproblemsolving, yu2024metamathbootstrapmathematicalquestions}. We adopt mathematical reasoning for its clarity in evaluating learning effects.\footnote{Ideally, our framework would apply more generally; developing valid measurements in other domains remains challenging.}

\subsection{RL-Reflection and Self-Correction}

Self-correction capabilities have evolved from prompt engineering to learned behaviors. While \cite{wei2022chain} and \cite{feng2023revealingmysterychainthought} established the computational necessity of intermediate reasoning steps, critical surveys by \cite{kamoi2024critical} and \cite{huang2024llm} revealed that "intrinsic self-correction" in SFT models is often illusory or Oracle-dependent. \cite{kumar2024score} (SCoRe) identified SFT's distribution mismatch (training on others' mistakes versus correcting one's own) and proposed multi-turn RL on self-generated traces. Subsequent frameworks including Self-Rewarding Correction \citep{xiong2025selfrewardingcorrectionmathematicalreasoning} and PAG \citep{jiang2025pagmultiturnreinforcedllm} unify solver and critic into a single policy. Most relevant, \cite{zhao2025boostingllmreasoningspontaneous} and \cite{ma2025s2rteachingllmsselfverify} explicitly decompose policies into answer generation and verification, demonstrating performance gains from this breakdown. We focus instead on the \textit{implicit} policy decomposition driving emergent self-reflection in models like DeepSeek-R1.

\section{Formalization of The Decision-Sampling Hypothesis}\label{sec:formalization}
We develop a formal framework explaining why RL produces self-reflection while SFT fails. The key concept is the \textbf{gradient attribution property}, characterizing how reward gradients distribute across policy components. When a model performs multiple functions through a single policy, balanced gradient attribution enables learning all functions effectively, while imbalanced attribution causes some functions to be learned ineffectively.
We model an LLM solving query $Q$ as a sequential decision process where at each iteration $k \geq 1$, the model: (i) samples a candidate answer $A_k$ with reasoning $T_k$, then (ii) decides whether to STOP or RESAMPLE. This decomposition follows the \textit{options framework} \citep{suttonetal1999}: $\pi_{\text{sample}}$ corresponds to the intra-option policy (executing a temporally extended action, generating a reasoning trace and candidate answer), while $\pi_d$ corresponds to the termination condition (deciding whether to accept the current answer or initiate another attempt). The decomposition is analytical, not architectural, the model remains a single next-token predictor throughout, but it enables analysis of how gradient signals distribute across functionally distinct phases of generation. We prove that surrogate rewards\footnote{We analyze simple surrogate rewards for tractability; practice uses clipped variants for stability.} exhibit balanced attribution while KL-penalties exhibit imbalanced attribution—explaining why SFT-trained models lack genuine self-reflection.

Section 3.1 introduces our formal setting; Section 3.2 defines gradient attribution; Section 3.3 analyzes both reward components. Section 5 applies these results to SFT behavior.

\subsection{Settings and Preliminary}

\textbf{State Space} The state at step $k$ is $s_k = (Q, A_k, T_k)$, encoding the query and the model's current candidate solution.

\textbf{Policy Decomposition} The overall policy $\pi_\theta$ decomposes into two components:
\begin{itemize}
    \item Sampling policy $\pi_{\text{sample}}(\cdot | s_{k-1}; \theta)$: distribution over $(A_k, T_k)$ given context
    \item Decision policy $\pi_d(\cdot | s_k; \theta)$: distribution over ${\text{STOP}, \text{RESAMPLE}}$ given current state
\end{itemize}

\noindent \textbf{Trajectory Probability Factorization:} A trajectory $\tau$ of length $T$ consists of a sequence of samples and decisions:
\begin{equation*}
\resizebox{0.95\hsize}{!}{$
\tau = (A_1, T_1, \text{RESAMPLE}, \dots, A_T, T_T, \text{STOP})
$}
\end{equation*}

\begin{lemma}
The probability of trajectory $\tau$ under policy $\pi_\theta$ factorizes as:
\begin{multline*}
P(\tau|Q; \theta) = \left[ \prod_{k=1}^{T} \pi_{\text{sample}}(A_k, T_k | s_{k-1}; \theta) \right] \\
\cdot \left[ \prod_{k=1}^{T-1} \pi_d(\text{R} | s_k; \theta) \right] \cdot \pi_d(\text{S} | s_T; \theta)
\end{multline*}
\end{lemma}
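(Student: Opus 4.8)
The plan is to read the factorization off directly from the chain rule of conditional probability, using the sequential structure of the decision process together with the Markov property implicit in the definitions of $\pi_{\text{sample}}$ and $\pi_d$. First I would list the atomic events constituting $\tau$ in temporal order: for each $k = 1, \dots, T$, the sampling event ``produce $(A_k, T_k)$'' followed by the decision event ``emit $d_k$'', where $d_k = \text{RESAMPLE}$ for $k \le T-1$ and $d_T = \text{STOP}$. Letting $h_k^{-}$ denote the generation history just before the $k$-th sampling event and $h_k^{+} = (h_k^{-}, A_k, T_k)$ the history just after it, the chain rule yields
\[ P(\tau \mid Q;\theta) = \prod_{k=1}^{T} P(A_k, T_k \mid h_k^{-};\theta)\, P(d_k \mid h_k^{+};\theta). \]

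Next I would apply the structural assumptions of the model. Reaching iteration $k$ forces every earlier decision to have been RESAMPLE, so $h_k^{-}$ is determined by $(Q, A_1, T_1, \dots, A_{k-1}, T_{k-1})$ together with those forced decisions; by the Markov property built into the policy decomposition, the sampling distribution depends on this history only through the current state $s_{k-1} = (Q, A_{k-1}, T_{k-1})$, i.e.\ $P(A_k, T_k \mid h_k^{-};\theta) = \pi_{\text{sample}}(A_k, T_k \mid s_{k-1};\theta)$ (with the convention that $s_0 = Q$ is the bare query). Symmetrically, $h_k^{+}$ determines $s_k = (Q, A_k, T_k)$, and since the decision policy is Markov in $s_k$, $P(d_k \mid h_k^{+};\theta) = \pi_d(d_k \mid s_k;\theta)$.

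Finally I would substitute and regroup: the sampling factors collect into $\prod_{k=1}^{T}\pi_{\text{sample}}(A_k, T_k \mid s_{k-1};\theta)$, and the decision factors split into the $T-1$ RESAMPLE terms $\prod_{k=1}^{T-1}\pi_d(\text{RESAMPLE} \mid s_k;\theta)$ and the single final term $\pi_d(\text{STOP} \mid s_T;\theta)$, which is exactly the claimed identity.

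The proof is essentially bookkeeping, and the only step that warrants care — the nearest thing to an obstacle — is the collapse of ``conditioning on the full generation history'' to ``conditioning on the current state''. This is not something to be derived but a modeling assumption: $s_k = (Q, A_k, T_k)$ is posited to be a sufficient statistic for both policies. I would therefore state this Markov/sufficiency assumption explicitly at the outset so that the chain-rule computation runs without friction, noting that it is precisely the abstraction that lets the LLM's autoregressive generation be treated as a semi-Markov decision process in the options sense invoked earlier.
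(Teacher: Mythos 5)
Your proposal is correct and follows essentially the same route as the paper's own proof: apply the chain rule to the alternating sample--decide sequence, identify each conditional with the corresponding policy component, and regroup the factors. The only difference is that you make explicit the Markov/sufficiency assumption (that conditioning on the full generation history collapses to conditioning on $s_k$), which the paper's proof uses silently; this is a welcome clarification rather than a divergence.
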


\begin{corollary}
The log-probability gradient separates into sampling and decision components:
\begin{equation*}
\begin{split}
\nabla_\theta \log P(\tau|Q; \theta) &= \sum_{k=1}^{T} \nabla_\theta \log \pi_{\text{sample}}(A_k, T_k | s_{k-1}) \\
&\quad + \sum_{k=0}^{T} \nabla_\theta \log \pi_d(a_k | s_k)
\end{split}
\end{equation*}
where $a_k \in \{\text{RESAMPLE}, \text{STOP}\}$ denotes the decision at step $k$.
\end{corollary}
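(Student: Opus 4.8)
The plan is to obtain the Corollary directly from the Lemma: take the logarithm of the trajectory factorization, then apply $\nabla_\theta$, relying only on two elementary facts — the logarithm converts a finite product into a sum, and the gradient is a linear operator.

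First I would start from the factorization
\[ P(\tau \mid Q; \theta) = \left[\prod_{k=1}^T \pi_{\text{sample}}(A_k, T_k \mid s_{k-1}; \theta)\right] \cdot \left[\prod_{k=1}^{T-1} \pi_d(\text{RESAMPLE} \mid s_k; \theta)\right] \cdot \pi_d(\text{STOP} \mid s_T; \theta) \]
proved in the Lemma. All factors are strictly positive along a realized trajectory, so taking $\log$ of both sides gives
\[ \log P(\tau \mid Q; \theta) = \sum_{k=1}^T \log \pi_{\text{sample}}(A_k, T_k \mid s_{k-1}; \theta) + \sum_{k=1}^{T-1} \log \pi_d(\text{RESAMPLE} \mid s_k; \theta) + \log \pi_d(\text{STOP} \mid s_T; \theta). \]

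Next I would differentiate term by term with respect to $\theta$. Linearity of $\nabla_\theta$ turns the first sum into the \emph{sampling gradient} $\sum_{k=1}^T \nabla_\theta \log \pi_{\text{sample}}(A_k, T_k \mid s_{k-1}; \theta)$ verbatim, and produces a decision contribution $\sum_{k=1}^{T-1} \nabla_\theta \log \pi_d(\text{RESAMPLE} \mid s_k; \theta) + \nabla_\theta \log \pi_d(\text{STOP} \mid s_T; \theta)$. The last step is purely notational: introducing $a_k \in \{\text{RESAMPLE}, \text{STOP}\}$ for the decision actually executed at step $k$ — so $a_k = \text{RESAMPLE}$ for $k = 1, \dots, T-1$ and $a_T = \text{STOP}$ — merges the two pieces into the single sum $\sum_k \nabla_\theta \log \pi_d(a_k \mid s_k; \theta)$, i.e.\ the \emph{decision gradient}.

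The one point needing care — the nearest thing to an obstacle in an otherwise immediate argument — is the index range of the decision sum: the Lemma generates exactly $T$ decision events ($T-1$ RESAMPLEs and one STOP), so I would index that sum as $\sum_{k=1}^T$, or equivalently adopt the convention that the $k=0$ summand is vacuous (the initial state carries no decision), which reconciles it with the $\sum_{k=0}^T$ written in the statement. Beyond this bookkeeping no analysis is required; differentiability of $\log \pi_{\text{sample}}(\cdot;\theta)$ and $\log \pi_d(\cdot;\theta)$ in $\theta$ is inherited from the smooth (softmax) parametrization of the policy and is assumed throughout.
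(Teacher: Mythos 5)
Your proposal is correct and follows essentially the same route as the paper's Appendix A.2 proof: take the logarithm of the Lemma's factorization, differentiate term by term, and relabel the decision terms via $a_k$, with the $k=0$ summand treated as vacuous (the paper makes this explicit by setting $\pi_d(a_0\mid s_0)=1$ so its gradient vanishes). No gaps.
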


\noindent \textbf{Reward Function:} For query $Q$ with ground truth answer $A^*_Q$, the reward of trajectory $\tau$ ending at step $T$ is $R(\tau) = \mathbb{I}(A_T = A^*_Q)$. Thus, for a given policy $\pi_\theta$ and reward function $R$, define:
\begin{equation*}
Q^\pi_R(s, a) = \mathbb{E}_{\pi_\theta} \left[ \sum_{k=t}^{\infty} \gamma^{k-t} R_k \,\Big|\, s_t = s, a_t = a \right]
\end{equation*}
where $\gamma \in (0, 1]$ is the discount factor.

\begin{lemma}
The gradient of the expected return decomposes as:
\begin{equation*}
\resizebox{1.0\hsize}{!}{$
\begin{split}
\nabla_\theta J(\theta) = \mathbb{E}_{\tau \sim \pi_\theta} \Bigg[ \sum_{t=0}^{T} \bigg( & \nabla_\theta \log \pi_{\text{sample}}(a'_t | s_{t-1}) Q^\pi_{\text{sample}} \\
& + \nabla_\theta \log \pi_d(a''_t | s_t) Q^\pi_d \bigg) \Bigg]
\end{split}
$}
\end{equation*}
where $a'_t = (A_t, T_t)$ denotes sampling actions and $a''_t \in \{\text{RESAMPLE}, \text{STOP}\}$ denotes decision actions.
\end{lemma}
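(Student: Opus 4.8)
The plan is to reduce the claim to the classical policy gradient theorem by viewing the alternating sample/decision process as a single (semi-)Markov decision process, and then to re-group the resulting sum using the score-function decomposition already established in the Corollary.

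First I would set up the combined process explicitly: interleave the sampling micro-steps and the decision micro-steps into one sequence of state--action pairs. At a sampling micro-step the state is $s_{t-1}=(Q,A_{t-1},T_{t-1})$ (with $s_{-1}$ the context holding only the query $Q$), the action is $a'_t=(A_t,T_t)$ drawn from $\pi_{\text{sample}}(\cdot\mid s_{t-1};\theta)$, and no reward is emitted; at a decision micro-step the state is $s_t=(Q,A_t,T_t)$, the action is $a''_t\in\{\text{RESAMPLE},\text{STOP}\}$ drawn from $\pi_d(\cdot\mid s_t;\theta)$, with the terminal reward $\mathbb{I}(A_T=A_Q^*)$ emitted precisely when $a''_T=\text{STOP}$. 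Because $\pi_\theta=\pi_{\text{sample}}\cdot\pi_d$ by Lemma 1, the law of a trajectory in this combined MDP is exactly $P(\tau\mid Q;\theta)$, so $J(\theta)=\mathbb{E}_{\tau\sim\pi_\theta}[\sum_t\gamma^t R_t]$ is the value of the combined MDP and the standard policy gradient theorem applies verbatim to it.

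Second I would invoke the policy gradient theorem in its $Q$-function form for this MDP, namely $\nabla_\theta J(\theta)=\mathbb{E}_{\tau\sim\pi_\theta}\big[\sum_{m}\nabla_\theta\log\pi(a_m\mid s_m;\theta)\,Q^\pi_R(s_m,a_m)\big]$, the sum running over all micro-steps $m$. The technical content is the usual ``reward-to-go'' argument: start from the REINFORCE identity $\nabla_\theta J=\mathbb{E}_\tau[(\sum_t\gamma^t R_t)\,\nabla_\theta\log P(\tau\mid Q;\theta)]$, substitute the score-function decomposition from the Corollary, and then show that every cross term in which a score factor from micro-step $m$ multiplies a reward emitted weakly before $m$ has zero expectation (condition on $s_m$ and use $\mathbb{E}[\nabla_\theta\log\pi(a_m\mid s_m)\mid s_m]=0$), while the remaining rewards collapse to $Q^\pi_R(s_m,a_m)$ by the Markov property and the tower rule. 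This is where I expect the main obstacle to lie: the horizon $T$ is random and there are two heterogeneous action types, so the causality bookkeeping must be done with care. In particular one must verify that a sampling score $\nabla_\theta\log\pi_{\text{sample}}(a'_t\mid s_{t-1})$ is uncorrelated with the reward-to-go only \emph{relative to} $s_{t-1}$ (not $s_t$, since $s_t$ is a deterministic function of $s_{t-1}$ and $a'_t$ and would leak the action), and symmetrically for decision scores relative to $s_t$; this asymmetry is exactly what pins down the two different conditioning states appearing in the statement.

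Finally I would split the single sum over micro-steps into the sampling micro-steps and the decision micro-steps. The sampling micro-steps contribute $\sum_{t}\nabla_\theta\log\pi_{\text{sample}}(a'_t\mid s_{t-1})\,Q^\pi_{\text{sample}}(s_{t-1},a'_t)$, where $Q^\pi_{\text{sample}}$ is $Q^\pi_R$ evaluated at a sampling state--action pair, and the decision micro-steps contribute $\sum_{t}\nabla_\theta\log\pi_d(a''_t\mid s_t)\,Q^\pi_d(s_t,a''_t)$ with $Q^\pi_d$ denoting $Q^\pi_R$ at a decision state--action pair; taking $\mathbb{E}_{\tau\sim\pi_\theta}$ and recombining yields the stated identity. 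I would also record the mild regularity assumptions used to interchange $\nabla_\theta$ with the (a priori infinite) sum and the expectation --- per-step finiteness of the thought/answer and decision alphabets, $\gamma<1$ or almost-sure termination, and differentiability of $\pi_\theta$ in $\theta$ --- all of which hold in the discrete-token setting considered here.
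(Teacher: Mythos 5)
Your proposal is correct and follows essentially the same route as the paper's proof in Appendix~A.3: both treat the interleaved sample/decision process as a single MDP, invoke the standard policy gradient theorem, and then split the resulting sum by action type into the $\pi_{\text{sample}}$ and $\pi_d$ contributions. If anything, your write-up is the more careful of the two---the paper simply cites the theorem in its state-visitation form and regroups terms, whereas you spell out the reward-to-go/causality argument and correctly identify why sampling scores must be conditioned on $s_{t-1}$ while decision scores are conditioned on $s_t$.
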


\subsection{Gradient Attribution Property}

We formalize gradient attribution through the information structure of Q-values.

\begin{definition}[Gradient Attribution Property]
Consider a reward function $R$ and the induced Q-values $Q^\pi_{\text{sample}}(s, a')$ and $Q^\pi_d(s, a'')$ under policy $\pi_\theta$.

\noindent A reward $R$ exhibits \textbf{balanced gradient attribution}\footnote{\textit{Interpretation: Balanced attribution means both Q-values can be expressed in terms of the same information about future rewards (the sufficient statistic $\Phi$), just evaluated at different states along the trajectory. This allows the unified network to learn a consistent representation of "future value" that both $\pi_{\text{sample}}$ and $\pi_d$ can use. }
} if the Q-values admit a decomposition:
\[ Q^\pi_{\text{sample}}(s_{k-1}, a'_k) = f_{\text{sample}}(s_{k-1}, a'_k, \Phi(s_k)) \]
\[ Q^\pi_d(s_k, a''_k) = f_d(s_k, a''_k, \Phi(s_{k+1})) \]
where $\Phi: \mathcal{S} \to \mathbb{R}$ is a scale-invariant sufficient statistic and the weighting functions $f_{\text{sample}}, f_d$ are of comparable magnitude: $f_{\text{sample}} = \Theta(f_d)$

\noindent A reward exhibits \textbf{unbalanced gradient attribution}\footnote{\textit{Unbalanced attribution means the Q-values require different information about the future ($\Phi_{\text{sample}}$ vs $\Phi_d$). The unified network cannot learn a single coherent representation of future value—attempting to do so leads to "mismeshing" the gradient signals, where updates intended for one conceptual function interfere with learning the other.}
} if the Q-values decompose as:
\[Q_{\text{sample}}^{\pi}(s_{k-1}, a'_k) = r_k^{\text{sample}} + \gamma \cdot V^{\pi}(s_k) \]
\[Q_d^{\pi}(s_k, a''_k) = r_k^{\text{decision}} + \gamma \cdot V^{\pi}(s_{k+1}) \]
where the immediate reward components satisfy $|r_k^{\text{sample}}| = \omega(|r_k^{\text{decision}}|)$ systematically (i.e., scale-separated by more than a constant factor across typical trajectories).

\end{definition}

\begin{remark}[balanced attribution]
In the case of balanced attribution, we can identify $\Phi(s) = V^\pi(s)$ as the state value function. This is the standard Bellman decomposition:
\[ Q^\pi(s, a) = \mathbb{E}[R(s,a) + \gamma V^\pi(s') | s, a] \]
The key is that the \textit{same} $V^\pi$ appears in the decomposition for both policy components.
\end{remark}

\begin{remark}[When unbalanced attribution arise?]
Consider a reward structure where:
\begin{itemize}
    \item The immediate reward for sampling depends on sequence length: $r_{\text{sample}} \sim O(L_k)$
    \item The immediate reward for decisions is scalar: $r_d \sim O(1)$
    \item Future value recursively depends on these asymmetric immediate rewards
\end{itemize}
Then $\Phi_{\text{sample}}$ must encode "accumulated future sampling costs" while $\Phi_d$ encodes "accumulated future decision costs", which are fundamentally different scales. This is precisely what happens with the KL penalty, as we show in the section 3.3.
\end{remark}

\begin{remark}{Operational Identification of Decision Actions}
    Although the decomposition is analytical, decision actions can be operationalized through identifiable proxy tokens that signal verification or revision intent. In our experiments, we detect decision boundaries via lexical markers including "I'll go back and check", "Let me recompute", "I made a mistake" and similar verification phrases. The token sequence following such markers until the next candidate answer constitutes a decision action, while extended generation of reasoning and answers constitutes sampling actions. This operationalization enables the calibration reported in Section 4.2: we estimate $p_d|C$ and $p_d|W$ by observing stopping and resampling behavior conditional on answer correctness, where "resampling" is identified by the presence of revision markers followed by a new solution attempt
\end{remark}

\subsection{Gradient Attribution of Surrogate Reward and KL Penalty}
In this subsection, we analyze the gradient attribution properties of two fundamental reward functions used in RLM training.

\subsubsection{Simple Surrogate Reward Has Balanced Gradient Attribution}

The simple surrogate reward with advantage $A_i$ for trajectory $\tau_i$ is:
\[ L_{\text{reward}}(\theta) = \mathbb{E}_{\tau_i \sim \pi_{\text{old}}}\left[\frac{\pi_\theta(\tau_i|Q_i)}{\pi_{\text{old}}(\tau_i|Q_i)}A_i\right] \]
where $A_i$ is the group-relative advantage measuring whether trajectory $\tau_i$ is better or worse than the average trajectory for query $Q_i$.

\begin{theorem}
For the surrogate reward objective with trajectory-level advantage $A_i$, the Q-values satisfy:
\[Q_{\text{sample}}^{\pi, \text{reward}}(s_{k-1}, a'_k) = \gamma^{\sum_{j=k}^{T} \text{len}(A_j, T_j)} \cdot A_i\]
\[Q_d^{\pi, \text{reward}}(s_k, a''_k) = \gamma^{\sum_{j=k}^{T} \text{len}(A_j, T_j)} \cdot A_i\]
The sufficient statistic $\Phi(s_k) = \gamma^{\sum_{j=k}^{T} \text{len}(A_j, T_j)} A_i$ is \textbf{scale-invariant}: it enters both Q-values as an identical multiplicative factor. Consequently, the advantage $A_i$ weights gradient contributions to $\pi_{\text{sample}}$ and $\pi_d$ symmetrically.
\end{theorem}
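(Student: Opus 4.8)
The plan is to reduce the surrogate objective to an ordinary reinforcement-learning problem with a single terminal reward, apply the factorized policy-gradient decomposition (Lemma 2), and then read off both Q-values by an elementary discount computation. For the first step, note that $A_i$ is the group-relative advantage computed from $\pi_{\text{old}}$ and is detached — treated as a constant — during the update; so differentiating the importance ratio with the log-derivative trick and evaluating at $\theta=\theta_{\text{old}}$ gives $\nabla_\theta L_{\text{reward}}=\mathbb{E}_{\tau_i\sim\pi_{\text{old}}}[\nabla_\theta\log P(\tau_i\mid Q_i;\theta)\,A_i]$. This is exactly the policy gradient $\nabla_\theta J(\theta)$ for the sequential decision process of Section 3.1 in which the only nonzero reward is the terminal payoff $R(\tau)=A_i$, received when STOP is taken at step $T$, so that $R_k=0$ at every earlier step. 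Hence the Q-values appearing in Lemma 2 are precisely the $Q^{\pi,\text{reward}}$ of the statement, and it remains only to evaluate them.

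Next I would evaluate these Q-values directly from the discount structure. Because the reward is concentrated at the terminal transition, for any state–action pair occurring on trajectory $\tau_i$ the action value equals the terminal payoff discounted by the number of tokens emitted before termination: $Q^{\pi}(s,a)=\gamma^{N(s,a)}A_i$, where only sampling actions contribute tokens (decisions emit none). Taking the sampling action $a'_k=(A_k,T_k)$ from $s_{k-1}$ leaves $\text{len}(A_k,T_k)+\sum_{j=k+1}^{T}\text{len}(A_j,T_j)=\sum_{j=k}^{T}\text{len}(A_j,T_j)$ tokens until STOP, which yields the first display. Identifying $\Phi(s_k):=\gamma^{\sum_{j=k}^{T}\text{len}(A_j,T_j)}A_i$ (equivalently $V^\pi(s_k)$ up to the discount of the current sample), the decision at $s_k$ sees the same advantage remaining, so $Q_d^{\pi,\text{reward}}(s_k,a''_k)$ equals the same discounted advantage, giving the second display.

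Finally I would verify balanced attribution as defined above: both Q-values arise from the single scalar $A_i$ multiplied by a power of $\gamma$ fixed entirely by accumulated token counts — the same "currency" for sampling and for decisions, with no separate sampling-scale versus decision-scale. Concretely the weighting functions are $f_{\text{sample}}(s_{k-1},a'_k,\Phi)=\gamma^{\text{len}(A_k,T_k)}\Phi$ and $f_d(s_k,a''_k,\Phi)=\Phi$ (with terminal STOP as the boundary case $f_d=A_i$), so $f_{\text{sample}}=\Theta(f_d)$; and $\Phi$ is scale-invariant because it enters each Q-value only as a common multiplicative factor carrying the advantage. This is exactly balanced gradient attribution, so $A_i$ weights the gradient contributions to $\pi_{\text{sample}}$ and $\pi_d$ symmetrically.

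\textbf{Main obstacle.} The delicate part is the first step: rigorously justifying that the trajectory-level, detached advantage $A_i$ may be treated as a bona fide terminal reward so that the Q-value machinery of Lemma 2 applies, and matching the on-policy $Q^\pi$ with the off-policy ($\pi_{\text{old}}$) expectation defining the loss — both of which are clean only at the update point $\theta=\theta_{\text{old}}$. A secondary bookkeeping nuisance is that the $\gamma$-exponent for a decision at step $k$ differs from that of the sampling action at step $k$ by $\gamma^{\text{len}(A_k,T_k)}$; one must either work with $\gamma=1$ or absorb this factor into $f_{\text{sample}}$ and argue — as the $\Theta(\cdot)$ formulation is designed to permit — that it introduces no \emph{systematic} sampling-versus-decision scale separation, in sharp contrast to the KL penalty analyzed next.
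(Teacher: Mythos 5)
Your proposal is correct and follows the same skeleton as the paper's proof in Appendix A.4: apply the log-derivative trick to the importance ratio, treat the trajectory-level scalar $A_i$ as multiplying both the sampling and decision log-probability sums (via Corollary 3.1 / Lemma 3.2), and identify both Q-values with the length-discounted advantage $\Phi(s_k)=\gamma^{\sum_j \text{len}(A_j,T_j)}A_i$. Where you differ is in how the Q-values are obtained: the paper converts the off-policy expectation back to $\pi_\theta$ by importance sampling and then \emph{asserts} the Q-values by pattern-matching the resulting gradient against the form in Lemma 3.2, whereas you recast the objective as a genuine MDP with a single terminal payoff $A_i$ and compute $Q^\pi(s,a)=\gamma^{N(s,a)}A_i$ from the discount structure directly. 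Your route is the more principled one, and it surfaces a real bookkeeping issue that the paper silently elides: under token-counted discounting, the decision Q-value at step $k$ should carry exponent $\sum_{j=k+1}^{T}\text{len}(A_j,T_j)$ rather than $\sum_{j=k}^{T}$, since the $k$-th sample has already been emitted when the decision is taken; the two Q-values as printed in the theorem differ by a factor $\gamma^{\text{len}(A_k,T_k)}$ that the paper never reconciles. Your proposed resolutions (take $\gamma=1$, or absorb the factor into $f_{\text{sample}}$ and invoke the $\Theta(\cdot)$ slack in Definition 3.1) are both legitimate and arguably necessary for the theorem to be literally true; the paper's proof, which simply writes the same exponent in both displays, does not address this. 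Your flagged concern about matching the on-policy $Q^\pi$ with the $\pi_{\text{old}}$-expectation only at $\theta=\theta_{\text{old}}$ is likewise a gap in the paper's own argument (it invokes importance sampling to move between the two without comment), not a defect of your proposal.
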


\textbf{Proof Sketch.} The advantage $A_i$ is a trajectory-level scalar measuring overall quality. From the policy gradient theorem, the gradient is:
\begin{equation*}
\resizebox{1.0\hsize}{!}{$
\nabla_\theta \mathcal{L}_{\text{reward}} = \mathbb{E}_{\tau_i}\left[A_i \sum_{k=1}^T \nabla_\theta \log \pi_{\text{sample}}(\cdot) + A_i \sum_{k=0}^T \nabla_\theta \log \pi_d(\cdot)\right]
$}
\end{equation*}
The advantage multiplies both gradient components equally. Since both $\pi_{\text{sample}}$ and $\pi_d$ contributed to producing the trajectory that received advantage $A_i$, and the advantage reflects the \textit{coupled} outcome (correct answer + appropriate stopping), both Q-values equal the length-discounted advantage. The key observation is that the advantage $A_i$ is a trajectory-level scalar that does not decompose into action-specific components. From the policy gradient theorem, the advantage multiplies both gradient sums identically. There is no action-specific immediate reward that could create magnitude asymmetry, both policy components receive gradient signal proportional to their log-probability scores weighted by the \textit{same} scalar $A_i$. This is the formal sense in which attribution is "balanced." See Appendix~\ref{app:proof_corollary} for complete proof. 

\textbf{Interpretation.} The sufficient statistic $\Phi = \gamma^{\sum \text{len}(\cdot)} A_i$ encodes: "this trajectory will yield advantage $A_i$ at the end, discounted by the temporal distance." Both $\pi_{\text{sample}}$ and $\pi_d$ use this \textit{same} information when evaluating their actions. This enables coherent learning in the unified network.

\subsubsection{KL Penalty: Unbalanced Gradient Attribution}

In RLHF and GRPO, the KL divergence penalty regularizes the learned policy $\pi_\theta$ against a reference policy $\pi_{ref}$ (typically the SFT model). Following standard practice, we consider the token-level KL penalty added to the reward:\footnote{This corresponds to the ``KL in reward'' placement used in GRPO and PPO-based RLHF implementations.}
\begin{equation*}
\resizebox{0.8\hsize}{!}{$
    J(\theta) = \mathbb{E}_{\tau \sim \pi_\theta}\left[R(\tau) - w \sum_{t=1}^{|\tau|} \log \frac{\pi_\theta(a_t|s_t)}{\pi_{ref}(a_t|s_t)}\right]
$}
\end{equation*}
where $w \geq 0$ controls the regularization strength.

Under our policy decomposition, the per-trajectory KL penalty decomposes as:
\begin{equation*}
\resizebox{0.8\hsize}{!}{$
\begin{aligned}
D_{KL}^{\text{token}}(\tau) &= \underbrace{\sum_{k=1}^{T} \sum_{j=1}^{L_k} \log \frac{\pi_{\text{sample},\theta}(\text{token}_{k,j} \mid \cdot)}{\pi_{\text{sample},\text{ref}}(\text{token}_{k,j} \mid \cdot)}}_{\text{Sampling KL: } \sum_k L_k \text{ terms}} \\
&\quad + \underbrace{\sum_{k=1}^{T} \log \frac{\pi_{d,\theta}(a_k \mid s_k)}{\pi_{d,\text{ref}}(a_k \mid s_k)}}_{\text{Decision KL: } T \text{ terms}}
\end{aligned}
$}
\end{equation*}

The structural asymmetry is immediate: sampling actions contribute $\sum_{k=1}^{T} L_k$ terms while decision actions contribute only $T$ terms. Since typical reasoning traces have $L_k \gg 1$ (hundreds of tokens per attempt), the sampling component dominates.

Define the immediate KL penalties for each action type:
\begin{equation*}
\resizebox{0.8\hsize}{!}{$
\begin{aligned}
d_k^{\text{sample}} &= \sum_{j=1}^{L_k} \log \frac{\pi_{\text{sample},\theta}(\text{token}_{k,j} \mid \cdot)}{\pi_{\text{sample},\text{ref}}(\text{token}_{k,j} \mid \cdot)} \sim O(L_k) \\
d_k^{\text{decision}} &= \log \frac{\pi_{d,\theta}(a_k \mid s_k)}{\pi_{d,\text{ref}}(a_k \mid s_k)} \sim O(1)
\end{aligned}
$}
\end{equation*}

\begin{theorem}
For the token-level KL penalty, the Q-values satisfy the Bellman recursions:
\begin{equation*}
\resizebox{0.9\hsize}{!}{$
\begin{aligned}
Q_{\text{sample}}^{\pi,\text{KL}}(s_{k-1}, a'_k) &= d_k^{\text{sample}} + \gamma \cdot \mathbb{E}_{\pi_\theta} [Q_d^{\text{KL}}(s_k, a''_k)] \\
Q_d^{\pi,\text{KL}}(s_k, a''_k) &= d_k^{\text{decision}} + \gamma \cdot \mathbb{E}_{\pi_\theta} [Q_{\text{sample}}^{\text{KL}}(s_k, a'_{k+1})]
\end{aligned}
$}
\end{equation*}
where the immediate penalties satisfy:
\begin{equation*}
\resizebox{0.9\hsize}{!}{$
\begin{aligned}
d_k^{\text{sample}} &= \sum_{j=1}^{L_k} \log \frac{\pi_{\text{sample},\theta}(\text{token}_{k,j} \mid \cdot)}{\pi_{\text{sample},\text{ref}}(\text{token}_{k,j} \mid \cdot)} \sim O(L_k) \\
d_k^{\text{decision}} &= \log \frac{\pi_{d,\theta}(a_k \mid s_k)}{\pi_{d,\text{ref}}(a_k \mid s_k)} \sim O(1)
\end{aligned}
$}
\end{equation*}
The scale separation $|d_k^{\text{sample}}| / |d_k^{\text{decision}}| \approx L_k$ creates systematic gradient magnitude asymmetry.
\end{theorem}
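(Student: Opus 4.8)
The strategy is to treat the token-level KL regularizer as a signed additive reward, split it across the alternating sample/decision action sequence, and then apply the standard state--action Bellman recursion.

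\textbf{Step 1: the KL penalty as a per-action cost.} Write the regularizer in the objective as a reward $R_{\text{KL}}(\tau) = -w\sum_{t=1}^{|\tau|}\log\frac{\pi_\theta(a_t|s_t)}{\pi_{ref}(a_t|s_t)}$. Using the trajectory factorization of Lemma 1, $P(\tau|Q;\theta)=\prod_k \pi_{\text{sample}}(A_k,T_k|s_{k-1})\cdot\prod_k \pi_d(\cdot|s_k)$, I would partition the token index set $\{1,\dots,|\tau|\}$ into the blocks of $L_k$ reasoning/answer tokens belonging to each sampling action $a'_k=(A_k,T_k)$ and the singleton tokens belonging to each decision action $a''_k$. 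This identifies $d_k^{\text{sample}}$ (resp. $d_k^{\text{decision}}$) as exactly the immediate KL cost charged to $a'_k$ (resp. $a''_k$), so the KL-penalty reward process is a genuine additive reward over the action sequence $a'_1,a''_1,a'_2,a''_2,\dots$. Since $Q$-values are linear in the reward, the total Q-value splits as $Q^{\pi} = Q^{\pi,\text{reward}} + Q^{\pi,\text{KL}}$, and it suffices to analyze $Q^{\pi,\text{KL}}$ in isolation.

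\textbf{Step 2: the Bellman recursions.} The state trajectory alternates $s_{k-1}\xrightarrow{a'_k} s_k\xrightarrow{a''_k}(s_k\text{ if RESAMPLE, terminal if STOP})$. Substituting the per-action costs from Step 1 into the value identity $Q^\pi(s,a)=r(s,a)+\gamma\,\mathbb{E}_{s',a'\sim\pi}[Q^\pi(s',a')]$ yields the two claimed recursions: from $s_{k-1}$ a sampling move pays $d_k^{\text{sample}}$ and is followed by the decision action at $s_k$, giving $Q_{\text{sample}}^{\pi,\text{KL}}(s_{k-1},a'_k)=d_k^{\text{sample}}+\gamma\,\mathbb{E}_{\pi_\theta}[Q_d^{\text{KL}}(s_k,a''_k)]$; from $s_k$ a RESAMPLE decision pays $d_k^{\text{decision}}$ and is followed by the next sampling action $a'_{k+1}$ drawn at $s_k$, while a STOP decision pays $d_k^{\text{decision}}$ with zero continuation, and both cases collapse into $Q_d^{\pi,\text{KL}}(s_k,a''_k)=d_k^{\text{decision}}+\gamma\,\mathbb{E}_{\pi_\theta}[Q_{\text{sample}}^{\text{KL}}(s_k,a'_{k+1})]$ once the STOP continuation is set to zero. (With per-token discounting, as in the surrogate-reward analysis, the $\gamma$ on the continuation becomes $\gamma^{L_k}$; the structure is unchanged.)

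\textbf{Step 3: scale separation, and the main obstacle.} Under the standard assumption that $\pi_{ref}$ keeps each token probability bounded below, every per-token log-ratio is $O(1)$, so $d_k^{\text{sample}}$ — a sum of $L_k$ such terms — is $O(L_k)$, whereas $d_k^{\text{decision}}$, a single log-ratio, is $O(1)$; taking expectations, $\mathbb{E}_{\pi_\theta}[d_k^{\text{sample}}]=\sum_{j=1}^{L_k} D_{KL}(\pi_{sample,\theta}(\cdot|\cdot)\,\|\,\pi_{sample,ref}(\cdot|\cdot))$ is a sum of $L_k$ nonnegative per-token divergences, hence $\Theta(L_k)$ whenever $\pi_\theta$ has drifted from $\pi_{ref}$, so $|d_k^{\text{sample}}|/|d_k^{\text{decision}}|\asymp L_k$. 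For reasoning traces with $L_k$ in the hundreds this is $\omega(1)$, which is exactly the unbalanced gradient attribution condition $|r_k^{\text{sample}}|=\omega(|r_k^{\text{decision}}|)$ with $r_k^{\text{sample}}=d_k^{\text{sample}}$, $r_k^{\text{decision}}=d_k^{\text{decision}}$, and $V^\pi$ the shared KL-penalty value function. The delicate point is the lower bound: the $O(L_k)$ upper bound is immediate, but asserting the asymmetry is \emph{systematic} ($\Theta(L_k)$, not $o(L_k)$ after cancellation) requires that the accumulated per-token sampling KL not vanish — clean in expectation via nonnegativity of $D_{KL}$, but a pathwise version needs a mild anti-cancellation/concentration assumption on the token log-ratios. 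The remaining bookkeeping — the terminal-STOP continuation and the per-token versus per-action placement of $\gamma$ — is routine but must be pinned down to make the recursion well posed.
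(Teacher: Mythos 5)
Your proposal follows essentially the same route as the paper's Appendix~A.5 proof: partition the token-level KL into per-action immediate penalties $d_k^{\text{sample}}$ and $d_k^{\text{decision}}$, run the Bellman recursion backwards from the terminal STOP, and read off the $O(L_k)$ versus $O(1)$ scale separation. Your added care about the lower bound (that the $\Theta(L_k)$ claim holds cleanly in expectation via nonnegativity of per-token KL but needs an anti-cancellation assumption pathwise) and about the $\gamma$ versus $\gamma^{L_k}$ placement is a genuine refinement of points the paper glosses over, but it does not change the argument's structure.
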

\textit{Proof Sketch.} The key observation is that $d_k^{sample}$ sums over $L_k$ token-level divergences while $d_k^{decision}$ is a single scalar, yielding $|d_k^{sample}| \approx L_k \cdot \delta$ versus $|d_k^{decision}| \approx \delta$ for token-level divergence magnitude $\delta$. Working backwards: at step $T$, $Q_d^{KL}(s_T, \text{STOP}) = d_T^{decision} \sim O(1)$, while $Q_{sample}^{KL}(s_{T-1}, a'_T) = d_T^{sample} + \gamma \cdot Q_d^{KL}(s_T, \text{STOP}) \approx O(L_T)$. This asymmetry propagates: $Q_{sample}^{KL}$ is dominated by immediate $O(L_k)$ penalties, while $Q_d^{KL}$ has small immediate terms. No unified sufficient statistic $\Phi$ exists since satisfying both $\Phi(s_k) \approx O(L_k) + \gamma\Phi(s_{k+1})$ (sampling) and $\Phi(s_k) \approx O(1) + \gamma\Phi(s_{k+1})$ (decision) is inconsistent when $L_k \gg 1$. See Appendix~\ref{app:proof_thm_kl}. 

\textit{Interpretation.} The KL penalty creates asymmetric regularization: changes to $\pi_{sample}$ incur immediate penalties proportional to $L_k$, while changes to $\pi_d$ incur $O(1)$ penalties. Combined with the surrogate reward's symmetric push (Theorem~3.1, where $Q_{sample}^{reward} = Q_d^{reward} = \gamma^{\sum len(\cdot)} A_i$), the net effect is differential learning: $\pi_d$ receives sustained gradients with minimal KL constraint, while $\pi_{sample}$ is heavily regularized.\footnote{This ``unbalanced attribution'' concerns gradient magnitude distribution, not existence of a unified $V^\pi$---which exists by standard theory. The distinction is whether immediate rewards differ by $O(1)$ (balanced) or $O(L)$ (unbalanced) factors.}

\begin{remark}[Interaction with clipping]
In practice, GRPO clips the importance ratio $\pi_\theta/\pi_{\text{old}}$ at $[1-\varepsilon, 1+\varepsilon]$. This ratio factorizes into sampling and decision components (Appendix~\ref{app:alt_proof}, Eq.~8), and the clipping threshold applies symmetrically to both terms, modifying the magnitude of reward gradients but preserving balanced attribution (Theorem~3.1). The $O(L_k)$ versus $O(1)$ asymmetry, by contrast, is structural: it arises from the token-level summation in the KL penalty and is orthogonal to how the importance ratio is clipped. The unbalanced gradient phenomenon therefore persists under clipped objectives.
\end{remark}

\section{Experiment and Calibration}

The theoretical framework predicts that RL's balanced gradient attribution enables coherent learning of both $\pi_{\text{sample}}$ and $\pi_d$, while SFT's unbalanced attribution leads to miss-meshed gradients that fail to develop an effective decision policy. We design experiments to validate these predictions and to decompose observed performance gains into their constituent policy components.

\subsection{Experimental Setup and Results}

\begin{table*}[t!]
\centering
\caption{\textsc{Model Performance On Arithmetic Tasks}} 
\label{tab:accu_over_tasks}

\renewcommand{\arraystretch}{1.25} 
\resizebox{\textwidth}{!}{%
\begin{tabular}{lccccccc} 
\hline\hline \noalign{\vskip 1mm} 

 & \multicolumn{1}{c}{3$\times$3} & \multicolumn{1}{c}{3$\times$4} & \multicolumn{1}{c}{3$\times$5} & \multicolumn{1}{c}{3$\times$6} & \multicolumn{1}{c}{3$\times$7} & \multicolumn{1}{c}{3$\times$8} & \multicolumn{1}{c}{3$\times$9} \\
 & (1) & (2) & (3) & (4) & (5) & (6) & (7) \\
\hline

\multicolumn{8}{c}{A. Baselines} \\
\hline \noalign{\vskip 1mm}

Base 
 & 74.0 & 34.0 & 5.0 & 1.0 & 4.0 & 3.0 & 1.0 \\
 & (65.4, 82.6) & (24.7, 43.3) & (0.7, 9.3) & (-1.0, 3.0) & (0.2, 7.8) & (-0.3, 6.3) & (-1.0, 3.0) \\
\noalign{\vskip 1ex} 

SFT (no reflection) 
 & 75.0 & 32.0 & 6.0 & 2.0 & 1.0 & 0.0 & 1.0 \\
 & (66.5, 83.5) & (22.9, 41.1) & (1.3, 10.7) & (-0.7, 4.7) & (-1.0, 3.0) & (0.0, 0.0) & (-1.0, 3.0) \\

\midrule 

\multicolumn{8}{c}{B. Reflecting Methods} \\
\hline \noalign{\vskip 1mm}

SFT (reflection) 
 & 96.0 & 92.0 & 94.0 & 49.0 & 4.0 & 1.0 & 0.0 \\
 & (92.2, 99.8) & (86.7, 97.3) & (89.3, 98.7) & (39.2, 58.8) & (0.2, 7.8) & (-1.0, 3.0) & (0.0, 0.0) \\
\noalign{\vskip 1ex}

RL 
 & 98.0 & 91.0 & 92.0 & 90.0 & 72.0 & 53.0 & 34.0 \\
 & (95.3, 100.7) & (85.4, 96.6) & (86.7, 97.3) & (84.1, 95.9) & (63.2, 80.8) & (43.2, 62.8) & (24.7, 43.3) \\
\hline
\end{tabular}%
}

\begin{minipage}{\textwidth} 
\vspace{1.5ex}
\footnotesize
\textsc{Note:} Columns (1)–(7) report accuracy percentages and 95\% confidence intervals ($n=100$) for arithmetic tasks of increasing complexity. Panel A displays results for baselines trained without retry patterns. Panel B displays results for models trained on trajectories with explicit reflection or reinforcement learning. All models are based on \texttt{Qwen2.5-7B-Instruct} \citep{qwen2025qwen25technicalreport}. 
\end{minipage}
\end{table*}

Table~\ref{tab:accu_over_tasks} presents model performance across multiplication tasks of increasing difficulty. The training distribution consists of $4\times 5$ and $5\times 4$ multiplication (Appendix~\ref{app:experiment_details}); all evaluation tasks ($3\times 3$ through $3\times 9$) are out-of-distribution (OOD), ordered by increasing difficulty to trace how each model degrades. Both RL and SFT (reflection) achieve near-ceiling performance on the easiest OOD tasks (96--98\% on $3\times 3$, 91--92\% on $3\times 4$) and remain comparable on $3\times 5$. The critical divergence emerges at $3\times 6$: RL maintains 90\% accuracy while SFT (reflection) drops to 49\%. This gap widens dramatically further OOD---on $3\times 9$, RL achieves 34\% versus SFT (reflection)'s 0\%.

SFT (reflection) substantially outperforms both Base and SFT (no reflection) across all task difficulties, indicating that reflection-rich training data does improve performance. However, the steep degradation pattern suggests this improvement reflects memorization of training-distribution patterns rather than learned generalization. Later in the section, we develop a statistical approach to estimate the performance of $\pi_{\text{sample}}$ and $\pi_d$, decomposing the performance gap between RL and SFT (reflection) into contributions from each policy component.

\subsection{A Simple Calibration of the Model}

To decompose the performance gap between RL and SFT into contributions from $\pi_{\text{sample}}$ and $\pi_d$, we construct a calibration model that abstracts the LLM's behavior into the two-stage decision-sampling process.

\paragraph{Sampling Accuracy ($\pi_{\text{sample}}$).} We model the sampling policy by a single probability $p_s = P(A_k = A_Q^*)$, representing the likelihood that any given sample is correct.

\paragraph{Decision Policy ($\pi_d$).} We model the decision policy as a classifier with two parameters:
\begin{align*}
p_{d|C} &= P(\texttt{STOP} \mid A_k = A_Q^*), \\
p_{d|W} &= P(\texttt{RESAMPLE} \mid A_k \neq A_Q^*).
\end{align*}
The parameter $p_{d|C}$ captures the probability of correctly accepting a correct answer; $p_{d|W}$ captures the probability of correctly rejecting an incorrect answer. An effective decision policy requires both to be high. The framework predicts that RL develops high $p_{d|W}$ through the surrogate reward's balanced gradient attribution\footnote{Also it's reasonable to conjecture that negative sampling and negative reward would tend to develops high $p_{d|C}$.}.

\paragraph{Model Accuracy.} Under the two-stage model, overall accuracy is:
\begin{equation*}
\resizebox{0.8\hsize}{!}{$\text{Model Acc} = \frac{p_s \cdot p_{d|C}}{1 - (p_s \cdot (1 - p_{d|C}) + (1 - p_s) \cdot p_{d|W})}$}
\label{eq:model_acc}
\end{equation*}

We estimate $(p_s, p_{d|C}, p_{d|W})$ from model outputs: $p_s$ from first-attempt accuracy, and the decision parameters from observed stopping and resampling behavior conditional on answer correctness.

\begin{figure*}[t!]
    \centering
    \begin{subfigure}[b]{0.48\textwidth}
        \centering
        \includegraphics[width=\linewidth]{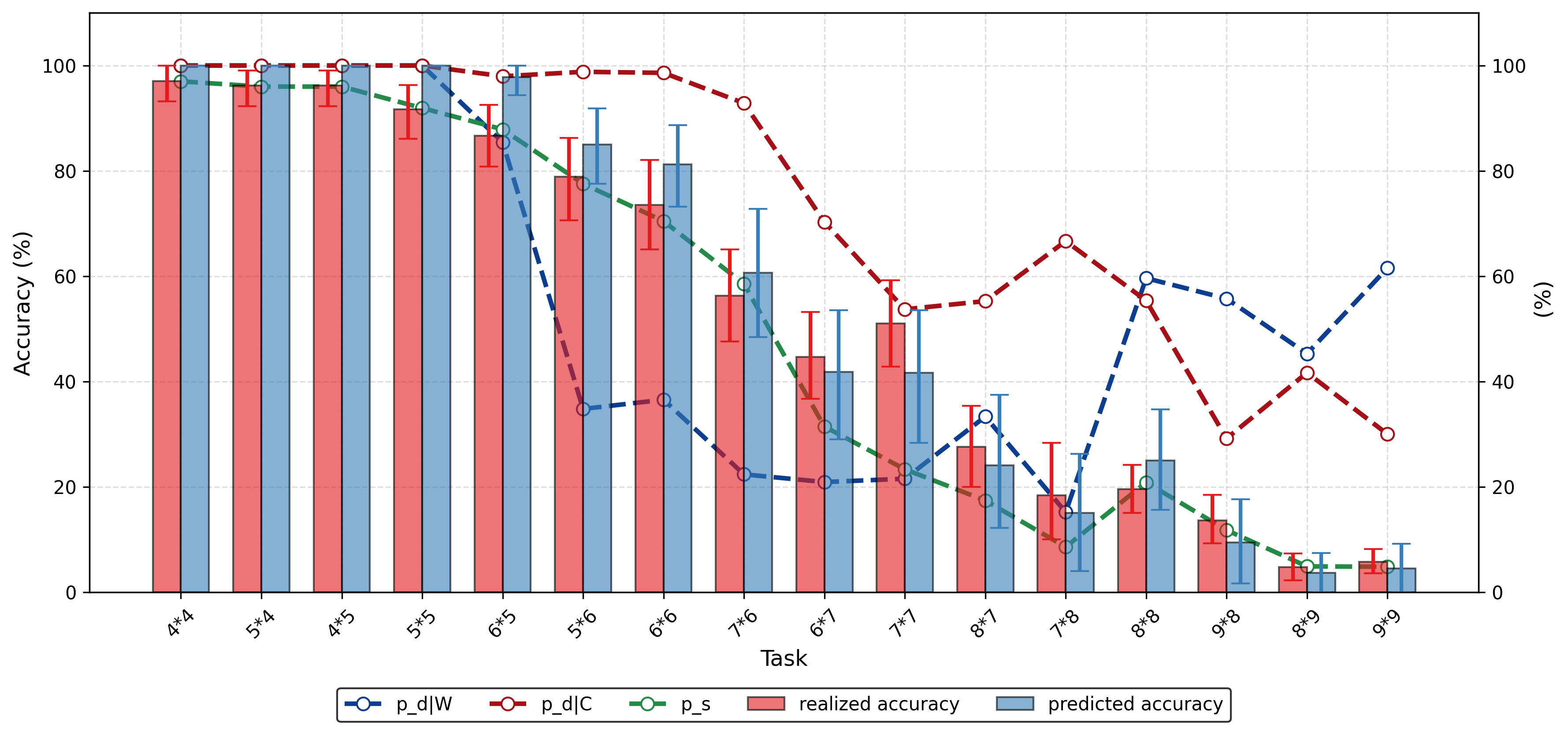}
        \caption{RL Thinking Model}
        \label{fig:calibration_rl}
    \end{subfigure}
    \hfill 
    \begin{subfigure}[b]{0.48\textwidth}
        \centering
        \includegraphics[width=\linewidth]{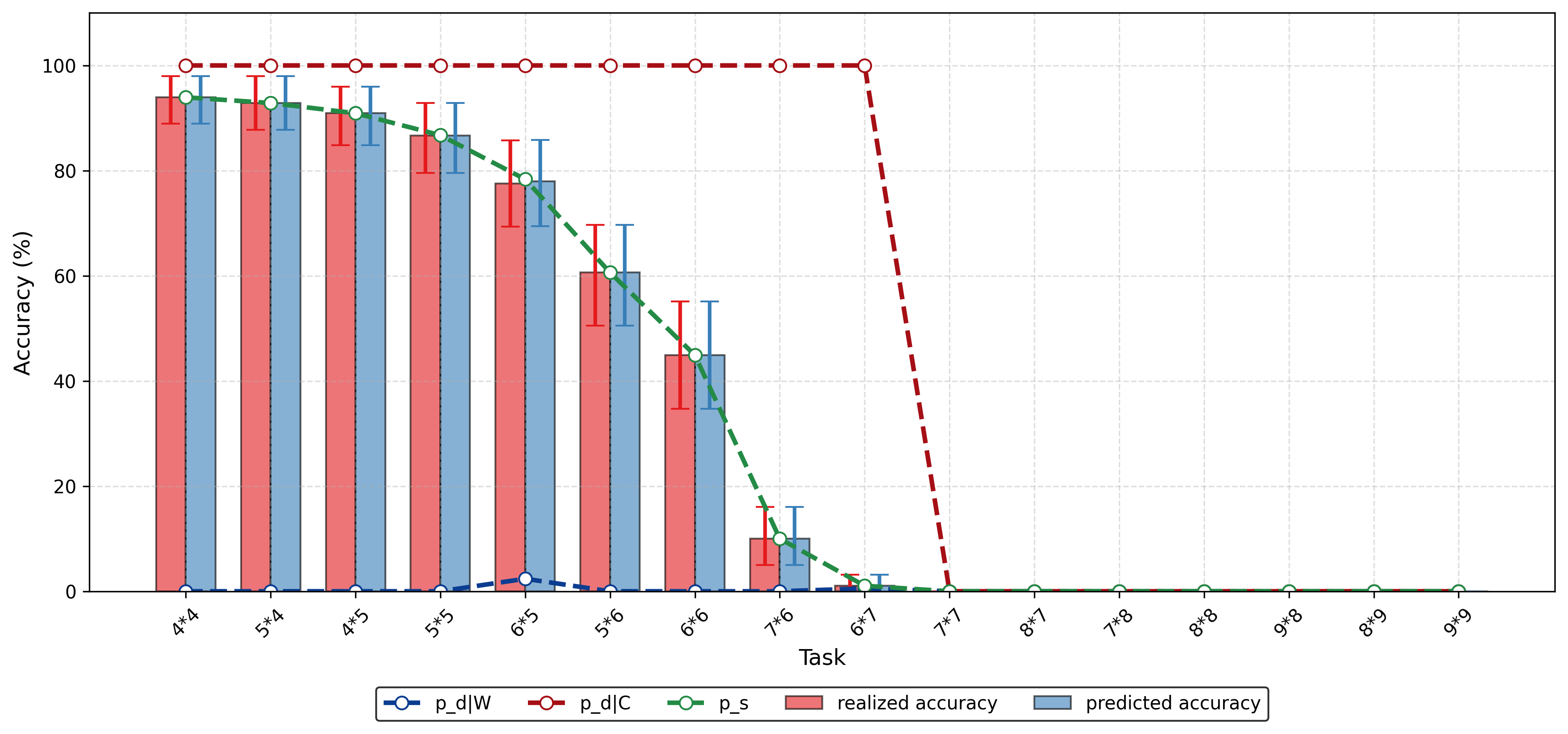}
        \caption{SFT Thinking Model}
        \label{fig:calibration_sft}
    \end{subfigure}
    
    \caption{\textbf{Calibrated Policy Parameters Across Arithmetic Task Sizes.} Comparison of the RL model (a) versus the SFT model (b). \textbf{$p_s$}: sampling accuracy (approximated by 1st try accuracy); \textbf{$p_{d|C}$}: probability of stopping conditional on getting a correct answer; \textbf{$p_{d|W}$}: probability of resampling conditional on getting an incorrect answer. Note that while $p_{d|C}$ remains high for both, the SFT model fails to maintain $p_{d|W}$ on harder tasks.}
    \label{fig:calibration_comparison}
\end{figure*}

Figure~\ref{fig:calibration_rl} presents calibrated parameters for the RL-trained model. The close agreement between predicted and observed accuracy validates the two-stage decomposition. Three patterns emerge: (1) sampling accuracy $p_s$ degrades monotonically with task difficulty, from approximately 80\% on the easiest OOD tasks to below 20\% on the most challenging ones; (2) decision parameters exhibit remarkable stability, with $p_{d|C}$ near ceiling and $p_{d|W}$ sustaining 40~60\% even far out-of-distribution; (3) the calibrated model accurately predicts observed accuracy across the full difficulty range.

Figure~\ref{fig:calibration_sft} reveals a starkly different pattern for SFT. While $p_{d|C}$ similarly remains high, $p_{d|W}$ collapses toward zero on OOD tasks—the model fails to reject incorrect answers, instead echoing first attempts. Sampling accuracy $p_s$ also degrades more precipitously than RL. The critical distinction: RL maintains discriminative ability ($p_{d|W} > 0$) where SFT does not, confirming that RL's superior generalization stems from learning an effective $\pi_d$ that enables error correction even when $\pi_{\text{sample}}$ falters.

\section{Towards Understanding the Insufficiency of SFT, and the Power of RL}

\subsection{SFT Echo Chamber}

Recent systematic analysis reveals that SFT consistently fails to develop self-correction despite extensive efforts to engineer reflection-rich training data. \cite{kang2025trymattersrevisitingrole} constructed SFT datasets with varying reflection steps and corrective $F\rightarrow T$ transition proportions (0\% to 100\%). Models trained on maximum-reflection data outperformed minimal-reflection variants by 4.05\%, yet decomposition reveals first-candidate accuracy accounts for 3.75\% while reflection contributes only 0.3\%. More critically, $p(F\rightarrow T)$ showed no meaningful improvement across any configuration Table~\ref{tab:kang2025}.

\begin{table}[H]
    \centering
    \caption{\cite{kang2025trymattersrevisitingrole}: FT Training Cannot Improve Self-Correction }
    \resizebox{0.9\hsize}{!}{%
    \begin{tabular}{l l l}
        \toprule
        \textbf{F$\to$T Ratio in Data} & \textbf{p(F$\to$T) Llama3.1-8B} & \textbf{p(F$\to$T) Qwen2.5-7B} \\
        \midrule
        100\% & 0.053 & 0.036 \\ [1.5ex]
        50\%  & 0.058 & 0.045 \\ [1.5ex]
        0\%   & 0.050 & 0.041 \\
        \bottomrule
    \end{tabular}}
    \label{tab:kang2025}
    \vspace{1ex}
    \raggedright
    \textit{Note: $p(F \to T)$ remains flat regardless of corrective reflection exposure.}
\end{table}

The gradient attribution framework explains this pattern. SFT operates similarly to a KL-divergence between $\pi_{\theta}$ and $\pi_{data}$, maximizing likelihood of entire trajectories without decomposing credit between "good decision to RESAMPLE" versus "good sampling." Even with abundant $F\rightarrow T$ patterns in training data, SFT teaches $\pi_{sample}$ to produce reflection-like text but provides misattributed gradient signal for $\pi_{d}$ to learn when to trigger correction, creating the 'SFT echo chamber'.

\subsection{Memorization vs. Generalization.}

If SFT primarily improves $\pi_{sample}$ without developing $\pi_d$, it is natural to predict SFT models will memorize training distributions rather than learn generalizable decision rules. \cite{chu2025sftmemorizesrlgeneralizes} test this directly across four task variants. The results are stark: SFT degrades OOD performance by 8–80\% while RL improves it by 3–61\%. Crucially, this pattern persists even when SFT uses sub-optimal trajectories containing errors, the memorization stems from the training objective's structure, not data quality.

\begin{figure}
    \centering
    \includegraphics[width=1\linewidth]{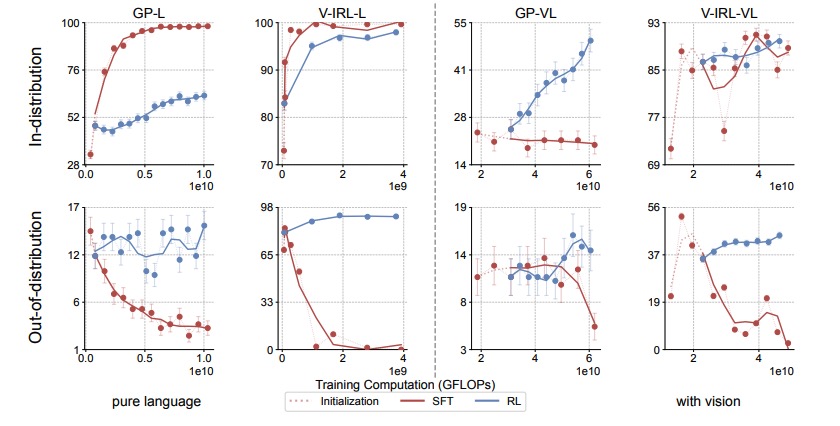}
    \caption{\cite{chu2025sftmemorizesrlgeneralizes}: ID/OOD performance comparison between SFT/RL}
    \label{fig:chuetal-fig5}
\end{figure}

\subsection{Why Does Dynamic Fine-tuning Work Better}

\cite{wu2025generalizationsftreinforcementlearning} prove that SFT's gradient is equivalent to policy gradient with an implicit reward inversely proportional to model confidence ($\frac{1}{\pi_{\theta}}$). This $\frac{1}{\pi_{\theta}}$ weighting creates policy-entangled rewards where Q-values depend on future policy probabilities rather than outcomes, precluding a joint Q-function for $\pi_{sample}$ and $\pi_{d}$\footnote{Refer to Appendix~\ref{app:proofs} and \ref{app:alt_proof} for proof of unbalanced gradient attribution.}. DFT rescales the objective by $\pi_{\theta}$, mitigating this entanglement\footnote{Though not fully canceling it due to length asymmetry; we provide rigorous proof in Appendix~\ref{app:sft_dft}.} so that Q-values depend more on trajectory outcomes. Their empirical gains are substantial: on Qwen2.5-Math-7B, standard SFT degrades AIME24 from 6.68 to 2.48, while DFT improves it to 8.56. We prove that DFT has improved gradient attribution properties in Appendix~\ref{app:sft_dft}.

\section{Conclusion}\label{sec:conclusion}

We develop the Two-Stage Decision-Sampling Hypothesis as a mechanistic framework explaining why RL post-training produces self-reflection while SFT fails. We introduce the gradient attribution property, a novel analytical tool characterizing how reward gradients distribute between multitasking policy components under the DS-Hypothesis. We prove that standard surrogate rewards exhibit balanced gradient attribution, where both policy components receive proportional learning signals through a unified sufficient statistic. In contrast, KL-divergence penalties exhibit imbalanced gradient attribution: length-weighting in token-level objectives creates asymmetric regularization that heavily constrains $\pi_{sample}$ while leaving $\pi_d$ under-optimized. This asymmetry explains the fundamental mechanistic gap between RL and SFT.

Empirical validation on arithmetic reasoning confirms the framework's predictions. The calibrated two-stage decomposition reveals that RL's superior out-of-distribution generalization stems primarily from learning an effective decision policy rather than improved sampling. We further demonstrate how the gradient attribution framework explains multiple empirical phenomena: SFT's echoing effect, memorization versus generalization patterns, DFT's effectiveness. Independent results across language, code, vision, and mathematical domains \citep{chu2025sftmemorizesrlgeneralizes, kang2025trymattersrevisitingrole} are consistent with the framework's domain-agnostic predictions, though direct decomposition in these settings remains for future work.


\section*{Limitations}

Our analysis abstracts away clipping, trust-region constraints, and other stabilization mechanisms standard in practical implementations (e.g., PPO, GRPO) to preserve analytical tractability and result generality. These mechanisms sacrifice analytical properties in favor of training stability. Extending the gradient attribution framework to exactly match specific algorithmic implementations requires additional theoretical work (e.g. Characterizing how clipping interacts with the reward structure or how trust regions modify the effective gradient flow).

We adopt mathematical reasoning as our empirical context for its clarity, well-defined correctness criteria, and controllable difficulty. The theoretical contributions---the gradient attribution properties and the balanced/unbalanced characterization---are derived from the mathematical structure of the training objective and are domain-agnostic. Independent results are consistent with the framework's predictions: \cite{chu2025sftmemorizesrlgeneralizes} demonstrate across four domains (language, code, vision, and math) that SFT degrades OOD performance by 8--80\% while RL improves it by 3--61\%; \cite{kang2025trymattersrevisitingrole} show on MATH benchmarks that $p(F\!\to\!T)$ remains flat for SFT regardless of corrective data composition. Extending the calibration-style decomposition of $\pi_{\text{sample}}$ and $\pi_d$ to these domains remains an important direction for future work (e.g. developing valid operation realizations of decision boundaries in code generation, open-ended reasoning, or creative tasks).

We analyze binary correctness rewards $R(\tau) = \mathbb{I}(A_T = A^*_Q)$. Many practical applications employ more nuanced reward functions, including partial credit, process-based rewards, or learned reward models. The gradient attribution properties under these alternative reward structures may differ and warrant separate investigation.

\section*{Acknowledgments}

AI assistants were used for editing and proofreading text, and LaTeX formatting. All technical content, theoretical derivations, experimental design, and scientific conclusions are the authors' own work. AI-generated content was reviewed and verified by the authors.


\bibliography{main}


\clearpage
\appendix

\section{Proofs for Gradient Attribution Property Framework}\label{app:proofs}
This appendix provides complete mathematical derivations for all results in Section 3.

\subsection{Proof of Lemma 3.1 (Trajectory Factorization)}

\noindent \textbf{Proof.} We prove this by applying the chain rule of probability to the sequential generation process. A trajectory $\tau$ of length $T$ is:
\begin{equation*}
\resizebox{0.95\hsize}{!}{$
\tau = (A_1, T_1, a_1, A_2, T_2, a_2, \ldots, A_T, T_T, a_T)
$}
\end{equation*}
where $a_k \in \{\text{RESAMPLE}, \text{STOP}\}$ for $k = 1, \ldots, T-1$ and $a_T = \text{STOP}$ (by definition, the trajectory ends with STOP).

By the chain rule:
\begin{equation*}
\begin{split}
P(\tau | Q; \theta) &= P(A_1, T_1 | Q; \theta) \\
&\quad \cdot P(a_1 | A_1, T_1, Q; \theta) \\
&\quad \cdot P(A_2, T_2 | a_1, A_1, T_1, Q; \theta) \cdots
\end{split}
\end{equation*}

Sequentially we write it to be:
\begin{align*}
P(\tau | Q; \theta) &= \pi_{\text{sample}}(A_1, T_1 | s_0; \theta) \nonumber \\
&\quad \cdot \pi_d(\text{RESAMPLE} | s_1; \theta) \nonumber \\
&\quad \cdot \pi_{\text{sample}}(A_2, T_2 | s_1; \theta) \cdots \nonumber \\
&\quad \cdot \pi_d(\text{RESAMPLE} | s_{T-1}; \theta) \nonumber \\
&\quad \cdot \pi_{\text{sample}}(A_T, T_T | s_{T-1}; \theta) \nonumber \\
&\quad \cdot \pi_d(\text{STOP} | s_T; \theta)
\end{align*}

Thus:
\begin{align*}
P(\tau | Q; \theta) &= \left[\prod_{k=1}^T \pi_{\text{sample}}(A_k, T_k | s_{k-1}; \theta)\right] \\
&\quad \cdot \left[\prod_{k=1}^{T-1} \pi_d(\text{R} | s_k; \theta)\right] \cdot \pi_d(\text{S} | s_T; \theta)
\end{align*}
\hfill $\Box$

\subsection{Proof of Corollary 3.1 (Gradient Decomposition)}\label{app:proof_corollary}

\noindent \textbf{Proof.} From Lemma 3.1:
\begin{equation*}
\resizebox{1.0\hsize}{!}{$
\begin{split}
\log P(\tau | Q; \theta) &= \sum_{k=1}^T \log \pi_{\text{sample}}(A_k, T_k | s_{k-1}; \theta) \\
&\quad + \sum_{k=1}^{T-1} \log \pi_d(\text{RESAMPLE} | s_k; \theta) \\
&\quad + \log \pi_d(\text{STOP} | s_T; \theta)
\end{split}
$}
\end{equation*}

Taking the gradient with respect to $\theta$:
\begin{equation*}
\resizebox{1.0\hsize}{!}{$
\begin{split}
\nabla_\theta \log P(\tau | Q; \theta) &= \sum_{k=1}^T \nabla_\theta \log \pi_{\text{sample}}(A_k, T_k | s_{k-1}; \theta) \\
&\quad + \sum_{k=1}^{T-1} \nabla_\theta \log \pi_d(\text{RESAMPLE} | s_k; \theta) \\
&\quad + \nabla_\theta \log \pi_d(\text{STOP} | s_T; \theta)
\end{split}
$}
\end{equation*}

Rewrite the decision gradient more compactly by defining:
\begin{itemize}
    \item $a_k = \text{RESAMPLE}$ for $k = 1, \ldots, T-1$
    \item $a_T = \text{STOP}$
    \item $a_0 = \text{START}$ (initial action, could be included for notational completeness)
\end{itemize}
Then:
\begin{equation*}
\resizebox{1.0\hsize}{!}{$
\begin{aligned}
\nabla_\theta \log P(\tau | Q; \theta) &= \sum_{k=1}^T \nabla_\theta \log \pi_{\text{sample}}(A_k, T_k | s_{k-1}; \theta) \\
&\quad + \sum_{k=0}^T \nabla_\theta \log \pi_d(a_k | s_k; \theta)
\end{aligned}
$}
\end{equation*}

where we interpret $\pi_d(a_0 | s_0) = 1$ (deterministic start) and thus $\nabla_\theta \log \pi_d(a_0 | s_0) = 0$.
This completes the proof. \hfill $\Box$

\subsection{Proof of Lemma 3.2 (Policy Gradient Theorem - Applied)}\label{app:proof_pg}

\textbf{Proof.} We adapt the standard policy gradient theorem to our factorized policy structure.

\textbf{Standard Policy Gradient Theorem}
The standard policy gradient theorem \citep{suttonetal1999} states that for a policy $\pi_\theta$ and expected return $J(\theta) = \mathbb{E}_{\tau \sim \pi\theta}[R(\tau)]$:
\[ \nabla_\theta J(\theta) = \int_{\mathcal{S}} \rho^\pi(s) \int_{\mathcal{A}} \nabla_\theta \pi(a|s;\theta) \cdot Q^\pi(s,a) \, da \, ds \]
where $\rho^\pi(s)$ is the discounted state visitation measure.

\subsubsection{State Visitation Distribution in Our Setting}
Define $\rho^\pi_t(s)$ as the probability of visiting state $s = (Q, A, T)$ at step $t$:
\begin{equation*}
\resizebox{1.0\hsize}{!}{$
\begin{aligned}
\rho^\pi_t(s_t) &= \sum_{s_{t-1}} \rho^\pi_{t-1}(s_{t-1}) \\
&\quad \cdot \pi_d(\text{RESAMPLE} | s_{t-1}) \cdot \pi_{\text{sample}}(A_t, T_t | s_{t-1})
\end{aligned}
$}
\end{equation*}
This says: to reach state $s_t$ at time $t$, we must have been at some state $s_{t-1}$ at time $t-1$, chosen to RESAMPLE, and then sampled $(A_t, T_t)$.

\textbf{Discounted state visitation:}
\[ \rho^\pi(s) = \sum_{t=1}^{\infty} \gamma^t \rho^\pi_t(s) \]

At each state $s_k = (Q, A_k, T_k)$, there are two types of actions:
\begin{itemize}
    \item Sampling actions $a' = (A, T)$ drawn from $\pi_{\text{sample}}(\cdot | s_{k-1})$
    \item Decision actions $a'' \in \{\text{STOP}, \text{RESAMPLE}\}$ drawn from $\pi_d(\cdot | s_k)$
\end{itemize}
However, these occur at different points in the trajectory:
\begin{itemize}
    \item After state $s_{k-1}$, we take sampling action $a'_k = (A_k, T_k)$ to reach state $s_k$
    \item At state $s_k$, we take decision action $a''_k \in \{\text{STOP}, \text{RESAMPLE}\}$
\end{itemize}

For sampling actions at state $s_{k-1}$:
\begin{equation*}
\resizebox{1.0\hsize}{!}{$
\begin{aligned}
& \int_{\mathcal{A}_{\text{sample}}} \nabla_\theta \pi_{\text{sample},\theta}(a' | s_{k-1}) \cdot Q^\pi(s_{k-1}, a') \, da' \\
&\quad = \mathbb{E}_{a' \sim \pi_{\text{sample}}}\left[\nabla_\theta \log \pi_{\text{sample},\theta}(a' | s_{k-1}) \cdot Q^\pi_{\text{sample}}(s_{k-1}, a')\right]
\end{aligned}
$}
\end{equation*}
For decision actions at state $s_k$:
\begin{equation*}
\resizebox{0.8\hsize}{!}{$
\begin{aligned}
& \sum_{a'' \in \{\text{STOP}, \text{RESAMPLE}\}} \nabla_\theta \pi_{d,\theta}(a'' | s_k) \cdot Q^\pi(s_k, a'') \\
&\qquad\qquad = \mathbb{E}_{a'' \sim \pi_d}\left[\nabla_\theta \log \pi_{d,\theta}(a'' | s_k) \cdot Q^\pi_d(s_k, a'')\right]
\end{aligned}
$}
\end{equation*}

Combining over all states in a trajectory of length $T$:
\begin{equation*}
\resizebox{1.0\hsize}{!}{$
\begin{aligned}
\nabla_\theta J(\theta) &= \mathbb{E}_{\tau \sim \pi_\theta}\left[\sum_{t=1}^T \nabla_\theta \log \pi_{\text{sample},\theta}(a'_t | s_{t-1}) \cdot Q^\pi_{\text{sample}}(s_{t-1}, a'_t)\right] \\
&\quad + \mathbb{E}_{\tau \sim \pi_\theta}\left[\sum_{t=1}^T \nabla_\theta \log \pi_{d,\theta}(a''_t | s_t) \cdot Q^\pi_d(s_t, a''_t)\right]
\end{aligned}
$}
\end{equation*}

We can combine these into a single expectation:
\begin{equation*}
\resizebox{1.0\hsize}{!}{$
\begin{aligned}
\nabla_\theta J(\theta) = \mathbb{E}_{\tau \sim \pi_\theta}\Bigg[\sum_{t=0}^T \bigg( & \nabla_\theta \log\pi_{\text{sample}}(a'_t|s_{t-1}) \cdot Q^\pi_{\text{sample}}(s_{t-1}, a'_t) \\
& + \nabla_\theta \log\pi_d(a''_t|s_t) \cdot Q^\pi_d(s_t, a''_t)\bigg)\Bigg]
\end{aligned}
$}
\end{equation*}
where we use the convention that terms with $t=0$ correspond to the initial action. \hfill $\Box$

\subsection{Proof of Theorem 3.1 (Balanced Attribution of Surrogate Reward)}\label{app:proof_thm_balanced}

The surrogate reward for trajectory $\tau_i$ is:
\[ L_{\text{reward}}(\theta) = \mathbb{E}_{\tau_i \sim \pi_{\text{old}}}\left[\frac{\pi_\theta(\tau_i|Q_i)}{\pi_{\text{old}}(\tau_i|Q_i)}A_i\right] \]
where:
\begin{itemize}
    \item $A_i$ is the advantage for trajectory $\tau_i$, computed using Group Relative Advantage Estimation (GRAE)
    \item For query $Q_i$ with multiple sampled trajectories $\{\tau_1, \ldots, \tau_G\}$ (the "group"):
    \[ A_i = \frac{R(\tau_i) - \text{mean}(R(\tau_1), \ldots, R(\tau_G))}{\text{std}(R(\tau_1), \ldots, R(\tau_G))} \]
    \item The reward function is $R(\tau) = \mathbb{I}(A_T = A^*_Q)$
\end{itemize}

Taking the gradient with respect to $\theta$:
\[ \nabla_\theta L_{\text{reward}}(\theta) = \mathbb{E}_{\tau_i \sim \pi_{\text{old}}}\left[\nabla_\theta \left(\frac{\pi_\theta(\tau_i|Q_i)}{\pi_{\text{old}}(\tau_i|Q_i)}\right) \cdot A_i\right] \]
Using the log-derivative trick:
\[ \nabla_\theta \left(\frac{\pi_\theta(\tau_i|Q_i)}{\pi_{\text{old}}(\tau_i|Q_i)}\right) = \frac{\pi_\theta(\tau_i|Q_i)}{\pi_{\text{old}}(\tau_i|Q_i)} \cdot \nabla_\theta \log \pi_\theta(\tau_i|Q_i) \]

From Corollary 3.1:
\begin{equation*}
\resizebox{1.0\hsize}{!}{$
\begin{aligned}
\nabla_\theta \log \pi_\theta(\tau_i|Q_i) &= \sum_{k=1}^{T_i} \nabla_\theta \log \pi_{\text{sample},\theta}(A_k, T_k | s_{k-1}) \\
&\quad + \sum_{k=0}^{T_i} \nabla_\theta \log \pi_{d,\theta}(a_k | s_k)
\end{aligned}
$}
\end{equation*}

Therefore, when sampling from $\pi_{\text{old}}$ and using importance weighting:
\begin{equation*}
\resizebox{1.0\hsize}{!}{$
\begin{aligned}
\nabla_\theta L_{\text{reward}}(\theta) &= \mathbb{E}_{\tau_i \sim \pi_{\text{old}}}\Bigg[\frac{\pi_\theta(\tau_i|Q_i)}{\pi_{\text{old}}(\tau_i|Q_i)} \cdot A_i \\
&\quad \cdot \left(\sum_{k=1}^{T_i} \nabla_\theta \log \pi_{\text{sample},\theta}(\cdot) + \sum_{k=0}^{T_i} \nabla_\theta \log \pi_{d,\theta}(\cdot)\right)\Bigg]
\end{aligned}
$}
\end{equation*}

By importance sampling, this equals:
\begin{equation*}
\resizebox{1.0\hsize}{!}{$
\begin{aligned}
\nabla_\theta L_{\text{reward}}(\theta) &= \mathbb{E}_{\tau_i \sim \pi_\theta}\Bigg[A_i \Bigg(\sum_{k=1}^{T_i} \nabla_\theta \log \pi_{\text{sample},\theta}(A_k, T_k | s_{k-1}) \\
&\quad + \sum_{k=0}^{T_i} \nabla_\theta \log \pi_{d,\theta}(a_k | s_k)\Bigg)\Bigg]
\end{aligned}
$}
\end{equation*}

Comparing with the policy gradient decomposition from Lemma 3.2, we identify the Q-values by noting that the gradient must have the form:
\begin{equation*}
\resizebox{1.0\hsize}{!}{$
\begin{aligned}
\nabla_\theta L_{\text{reward}}(\theta) &= \mathbb{E}_{\tau_i \sim \pi_\theta}\Bigg[\sum_{k=1}^{T_i} \nabla_\theta \log \pi_{\text{sample},\theta}(\cdot) \cdot Q^{\pi,\text{reward}}_{\text{sample}}(\cdot) \\
&\quad + \sum_{k=0}^{T_i} \nabla_\theta \log \pi_{d,\theta}(\cdot) \cdot Q^{\pi,\text{reward}}_d(\cdot)\Bigg]
\end{aligned}
$}
\end{equation*}

For the surrogate reward, both Q-values equal the length-discounted advantage:
\[ Q^{\pi,\text{reward}}_{\text{sample}}(s_{k-1}, (A_k, T_k)) = \gamma^{\sum_{j=k}^{T_i} \text{len}(A_j, T_j)} A_i \]
\[ Q^{\pi,\text{reward}}_d(s_k, a_k) = \gamma^{\sum_{j=k}^{T_i} \text{len}(A_j, T_j)} A_i \]

Define:
\[ \Phi(s_k) = \gamma^{\sum_{j=k}^{T_i} \text{len}(A_j, T_j)} A_i \]
This is the sufficient statistic of future rewards at state $s_k$. It encodes: "the trajectory will receive advantage $A_i$ at the end, which is at temporal distance $\sum_{j=k}^{T_i} \text{len}(A_j, T_j)$ from now."
Then:
\[ Q^{\pi,\text{reward}}_{\text{sample}}(s_{k-1}, a'_k) = \Phi(s_k) \]
\[ Q^{\pi,\text{reward}}_d(s_k, a''_k) = \Phi(s_k) \]
Both Q-values can be expressed using the same sufficient statistic $\Phi$, evaluated at the appropriate state. This satisfies Definition 3.1 for Balanced Gradient Attribution. $\Box$

\subsection{Proof of Theorem 3.2 (Unbalanced Attribution of KL Penalty)}\label{app:proof_thm_kl}

We analyze the gradient attribution properties of the token-level KL penalty.

\paragraph{KL Penalty Formulation.}
The standard token-level KL penalty, computed on trajectories sampled from the current policy, is:
\begin{equation*}
    D_{KL}^{token}(\tau) = \sum_{t=1}^{|\tau|} \log \frac{\pi_\theta(a_t|s_t)}{\pi_{ref}(a_t|s_t)}
\end{equation*}
where the sum runs over all tokens in trajectory $\tau$. The RL objective includes this as a penalty:
\begin{equation*}
    J(\theta) = \mathbb{E}_{\tau \sim \pi_\theta}\left[R(\tau) - w \cdot D_{KL}^{token}(\tau)\right]
\end{equation*}

\paragraph{Decomposition Under the Two-Stage Framework.}
A trajectory $\tau$ of length $T$ (i.e., $T$ sampling-decision cycles) consists of:
\begin{itemize}
    \item $T$ sampling actions, where the $k$-th sampling action $(A_k, T_k)$ comprises $L_k$ tokens
    \item $T$ decision actions $a_k \in \{\text{RESAMPLE}, \text{STOP}\}$, each a single token
\end{itemize}

The total number of tokens is $|\tau| = \sum_{k=1}^{T} L_k + T$. The KL penalty decomposes as:
\begin{equation*}
\resizebox{1.0\hsize}{!}{$
\begin{aligned}
D_{KL}^{\text{token}}(\tau) &= \underbrace{\sum_{k=1}^{T} \sum_{j=1}^{L_k} \log \frac{\pi_{\text{sample},\theta}(\text{token}_{k,j} \mid \text{context})}{\pi_{\text{sample},\text{ref}}(\text{token}_{k,j} \mid \text{context})}}_{\text{Sampling component}} \\
&\quad + \underbrace{\sum_{k=1}^{T} \log \frac{\pi_{d,\theta}(a_k \mid s_k)}{\pi_{d,\text{ref}}(a_k \mid s_k)}}_{\text{Decision component}}
\end{aligned}
$}
\end{equation*}

\paragraph{Immediate Penalties.}
Define the immediate KL penalty attributed to each action:

\textbf{For sampling actions:}
\begin{equation*}
    d_k^{sample} = \sum_{j=1}^{L_k} \log \frac{\pi_{sample,\theta}(token_{k,j}|context)}{\pi_{sample,ref}(token_{k,j}|context)}
\end{equation*}
This is a sum over $L_k$ terms. If individual token-level log-ratios have typical magnitude $\delta$ (which may be positive or negative depending on whether the current policy assigns higher or lower probability than the reference), then:
\begin{equation*}
    |d_k^{sample}| \leq \sum_{j=1}^{L_k} \left|\log \frac{\pi_{sample,\theta}(token_{k,j}|\cdot)}{\pi_{sample,ref}(token_{k,j}|\cdot)}\right| \sim O(L_k \cdot \delta)
\end{equation*}

\textbf{For decision actions:}
\begin{equation*}
    d_k^{decision} = \log \frac{\pi_{d,\theta}(a_k|s_k)}{\pi_{d,ref}(a_k|s_k)}
\end{equation*}
This is a single scalar:
\begin{equation*}
    |d_k^{decision}| \sim O(\delta)
\end{equation*}

The ratio of magnitudes is approximately $L_k : 1$. For typical reasoning traces with $L_k \approx 100$--$500$ tokens, this represents a two-orders-of-magnitude asymmetry.

\paragraph{Q-Value Recursions.}
\begin{proof}
The policy gradient for the KL penalty takes the standard form:
\begin{equation*}
\resizebox{1.0\hsize}{!}{$
\nabla_\theta \mathbb{E}_{\pi_\theta}[D_{\text{KL}}^{\text{token}}(\tau)] = \mathbb{E}_{\tau \sim \pi_\theta}\left[\sum_t \nabla_\theta \log \pi_\theta(a_t \mid s_t) \cdot Q^{\text{KL}}(s_t, a_t)\right]
$}
\end{equation*}
where $Q^{KL}(s_t, a_t)$ is the expected cumulative KL penalty from taking action $a_t$ at state $s_t$.

We derive the Q-values by working backwards from the terminal state.

\textbf{Terminal state (STOP at step $T$):}
\begin{equation*}
    Q_d^{KL}(s_T, \text{STOP}) = d_T^{decision}
\end{equation*}
There are no future actions, so the Q-value equals the immediate penalty. Magnitude: $O(1)$.

\textbf{Final sampling action (step $T$):}
\begin{equation*}
    Q_{sample}^{KL}(s_{T-1}, a'_T) = d_T^{sample} + \gamma \cdot Q_d^{KL}(s_T, \text{STOP})
\end{equation*}
Substituting:
\begin{align*}
    Q_{sample}^{KL}(s_{T-1}, a'_T) & = d_T^{sample} + \gamma \cdot d_T^{decision} \\ & \sim O(L_T) + O(1) \approx O(L_T)
\end{align*}
The immediate sampling penalty dominates.

\textbf{Penultimate decision (RESAMPLE at step $T-1$):}
\begin{equation*}
    \resizebox{1.0\hsize}{!}{$
    Q_d^{KL}(s_{T-1}, \text{RESAMPLE}) = d_{T-1}^{decision} + \gamma \cdot \mathbb{E}_{\pi_\theta}\left[Q_{sample}^{KL}(s_{T-1}, a'_T)\right]
    $}
\end{equation*}
Magnitude:
\begin{equation*}
    \resizebox{1.0\hsize}{!}{$
    Q_d^{KL}(s_{T-1}, \text{RESAMPLE}) \sim O(1) + \gamma \cdot O(L_T) \approx O(\gamma L_T)
    $}
\end{equation*}
The future sampling penalty dominates, but note: the \emph{immediate} contribution is only $O(1)$.

\textbf{General recursion (for $k < T$):}
\begin{equation*}
\resizebox{1.0\hsize}{!}{$
\begin{aligned}
Q_{\text{sample}}^{\text{KL}}(s_{k-1}, a'_k) &= d_k^{\text{sample}} + \gamma \cdot \mathbb{E}_{\pi_\theta}\left[Q_d^{\text{KL}}(s_k, a''_k)\right] \\
Q_d^{\text{KL}}(s_k, \text{RESAMPLE}) &= d_k^{\text{decision}} + \gamma \cdot \mathbb{E}_{\pi_\theta}\left[Q_{\text{sample}}^{\text{KL}}(s_k, a'_{k+1})\right]
\end{aligned}
$}
\end{equation*}
\textbf{Scale separation analysis.}
Define the accumulated future KL from state $s_k$:
\begin{equation*}
    V^{KL}(s_k) = \mathbb{E}_{\pi_\theta}\left[\sum_{t \geq k} \gamma^{t-k} d_t \,\bigg|\, s_k\right]
\end{equation*}

For the sampling policy at step $k$:
\begin{equation*}
    Q_{sample}^{KL}(s_{k-1}, a'_k) = \underbrace{d_k^{sample}}_{\sim O(L_k)} + \gamma \cdot V^{KL}(s_k)
\end{equation*}

For the decision policy at step $k$:
\begin{equation*}
    Q_d^{KL}(s_k, a''_k) = \underbrace{d_k^{decision}}_{\sim O(1)} + \gamma \cdot V^{KL}(s_{k+1})
\end{equation*}

Although $V^{KL}(s_k)$ and $V^{KL}(s_{k+1})$ are similar in magnitude (both accumulate future penalties), the \emph{immediate} contributions differ by a factor of $L_k$.

\textbf{Impossibility of unified sufficient statistic.}
Suppose a single sufficient statistic $\Phi : \mathcal{S} \to \mathbb{R}$ exists such that:
\begin{align*}
    Q_{sample}^{KL}(s_{k-1}, a'_k) &= f_{sample}(s_{k-1}, a'_k, \Phi(s_k)) \\
    Q_d^{KL}(s_k, a''_k) &= f_d(s_k, a''_k, \Phi(s_{k+1}))
\end{align*}

From the Bellman-style recursions, $\Phi$ must satisfy:
\begin{align*}
    \Phi(s_k) &= O(L_k) + \gamma \Phi(s_{k+1}) \quad \text{(from sampling Q-value)} \\
    \Phi(s_k) &= O(1) + \gamma \Phi(s_{k+1}) \quad \text{(from decision Q-value)}
\end{align*}

Subtracting: $0 = O(L_k) - O(1)$, which is a contradiction when $L_k \gg 1$.

Therefore, no single sufficient statistic $\Phi$ exists. The Q-values require distinct information structures:
\begin{align*}
    \Phi_{sample}(s_k) &= d_k^{sample} + \gamma \Phi_d(s_{k+1}) \sim O(L_k) + \gamma(\cdot) \\
    \Phi_d(s_k) &= d_k^{decision} + \gamma \Phi_{sample}(s_k) \sim O(1) + \gamma(\cdot)
\end{align*}

This establishes Unbalanced Gradient Attribution.
\end{proof}

\newpage

\begin{figure*}[h!]
\centering

\resizebox{0.8\textwidth}{!}{%
\begin{tikzpicture}[node distance=0.3cm, auto]

\node[font=\large\bfseries] at (5.5,6.3) {(a) Trajectory Generation Process};

\node[query] (q) at (0,4.5) {$Q$};

\node[sample, right=of q] (w) at (1.5,4.5) {Sample $\hat{W}$};
\node[font=\tiny, below=0.02cm of w, text=blue!70] {$\pi_{\text{sample}}$ (lots of tokens)};

\node[decision, right=of w] (r1) at (5.2,4.5) {R};
\node[font=\tiny, below=0.02cm of r1, text=orange!70] {$\pi_d$ (few tokens)};

\node[sample, right=of r1] (c) at (6.8,4.5) {Sample $\hat{C}$};
\node[font=\tiny, below=0.02cm of c, text=blue!70] {$\pi_{\text{sample}}$ (lots of tokens)};

\node[decision, right=of c] (s) at (10.5,4.5) {S};
\node[font=\tiny, below=0.02cm of s, text=orange!70] {$\pi_d$ (few tokens)};

\draw[arrow] (q) -- (w);
\draw[arrow] (w) -- (r1);
\draw[arrow] (r1) -- (c);
\draw[arrow] (c) -- (s);

\draw[arrow, dashed, thick, gray!60, bend right=60] (r1.north) to node[midway, above, font=\tiny, text=gray!60] {Reject and Resample} (w.north);

\draw[->] (-0.5,3.5) -- (11.5,3.5) node[right, font=\small] {\#Try};
\node[font=\tiny] at (0,3.3) {$t=0$};
\node[font=\tiny] at (3,3.3) {$t=k$};
\node[font=\tiny] at (6,3.3) {$t=k$};
\node[font=\tiny] at (8.5,3.3) {$t=N$};

\node[font=\large\bfseries] at (5.5,2.5) {Reward Calculation};

\node[draw, fill=yellow!10, rounded corners, font=\small, align=center] at (1.0,0.6) {
    Trajectory $\tau = (W, \text{RESAMPLE}, C, \text{STOP})$ \\[0.08cm]
    $\downarrow$ \\[0.03cm]
    Final Answer: $C = A^*$ \checkmark \\[0.08cm]
    $\downarrow$ \\[0.03cm]
    $R(\tau) = +1$ \\[0.08cm]
    $\downarrow$ \\[0.03cm]
    Advantage: $A_i = +$
};

\node[draw, fill=yellow!10, rounded corners, font=\small, align=center] at (8.0,0.6) {
    Trajectory $\tau = (W, \text{RESAMPLE}, C, \text{STOP})$ \\[0.08cm]
    $\downarrow$ \\[0.03cm]
    Final Answer: $C \neq A^*$ \checkmark \\[0.08cm]
    $\downarrow$ \\[0.03cm]
    $R(\tau) = +0$ \\[0.08cm]
    $\downarrow$ \\[0.03cm]
    Advantage: $A_i = -$
};

\node[font=\large\bfseries] at (5.5,-1.5) {Gradient Propagation};

\coordinate (pos_w) at (3,-2.5);
\coordinate (pos_r1) at (5.2,-2.5);
\coordinate (pos_c) at (8.5,-2.5);
\coordinate (pos_s) at (10.5,-2.5);

\node[font=\small, align=right] at (-0.8,-2.5) {Reward\\Track:};
\draw[gradient_arrow, line width=2.5pt, blue!70] (pos_s) -- (pos_c);
\draw[gradient_arrow, line width=2.5pt, blue!70] (pos_c) -- (pos_r1);
\draw[gradient_arrow, line width=2.5pt, blue!70] (pos_r1) -- (pos_w);
\draw[gradient_arrow, line width=2.5pt, blue!70] (pos_w) -- (0.5,-2.5);

\node[font=\scriptsize, blue!70, above] at (9.5,-2.5) {$A_i$};
\node[font=\scriptsize, blue!70, above] at (6.8,-2.5) {$A_i$};
\node[font=\scriptsize, blue!70, above] at (4.1,-2.5) {$A_i$};
\node[font=\scriptsize, blue!70, above] at (1.8,-2.5) {$A_i$};

\node[font=\small, align=right] at (-0.8,-3.5) {KL\\Track:};
\coordinate (kl_w) at (3,-3.5);
\coordinate (kl_r1) at (5.2,-3.5);
\coordinate (kl_c) at (8.5,-3.5);
\coordinate (kl_s) at (10.5,-3.5);

\draw[gradient_arrow, line width=0.8pt, red!70] (kl_s) -- (kl_c);
\draw[gradient_arrow, line width=3.5pt, red!70] (kl_c) -- (kl_r1);
\draw[gradient_arrow, line width=0.8pt, red!70] (kl_r1) -- (kl_w);
\draw[gradient_arrow, line width=3.5pt, red!70] (kl_w) -- (0.5,-3.5);

\node[font=\scriptsize, red!70, above] at (9.5,-3.5) {short length Agg $Q$};
\node[font=\scriptsize, red!70, above] at (6.8,-3.5) {long length Agg $Q$};
\node[font=\scriptsize, red!70, above] at (4.1,-3.5) {short length Agg $Q$};
\node[font=\scriptsize, red!70, above] at (1.8,-3.5) {long length Agg $Q$};
\end{tikzpicture}
}

\vspace{0.5cm} 

\resizebox{\textwidth}{!}{%
\begin{tikzpicture}[node distance=1cm, auto, scale=1, every node/.style={scale=1}]

\node[font=\large\bfseries] at (0,5.2) {(b) Surrogate Reward (Balanced)};
\node[font=\large\bfseries] at (7,5.2) {(c) KL Penalty (Unbalanced)};

\node[reward] (reward_left) at (0,4) {Reward Gradient\\$\nabla_\theta J_{\text{reward}}$};
\node[block, fill=green!10, text width=6.5em] (decomp_left) at (0,2.3) {$\sum [\nabla_\theta \log \pi \cdot Q^\pi]$};
\draw[arrow] (reward_left) -- (decomp_left) node[midway,right,font=\tiny] {$A_i$};

\node[draw, circle, fill=green!30, inner sep=1pt, font=\tiny] (q_shared) at (0,1.1) {$Q^\pi = A_{ij}$};
\node[policy] (sample_left) at (-1.8,0) {$\nabla_\theta \pi_{\text{sample}}$};
\node[policy] (decision_left) at (1.8,0) {$\nabla_\theta \pi_d$};

\draw[arrow, green!60!black, very thick] (decomp_left) -- (q_shared);
\draw[arrow, green!60!black, very thick] (q_shared) -- (sample_left) node[midway,left,font=\tiny] {$\times A_i$};
\draw[arrow, green!60!black, very thick] (q_shared) -- (decision_left) node[midway,right,font=\tiny] {$\times A_i$};

\node[draw, fill=green!5, text width=6.2cm, font=\tiny, rounded corners, inner sep=3pt] at (0,-1.8) {
    \begin{align*}
    Q^{\pi}_{\text{sample}} &\approx Q^{\pi}_d \approx A_{ij} \text{ (length discounted $A_i$)}\\
    \nabla_\theta J &\approx \sum_k \nabla_\theta \log \pi_{\text{sample}}(A_k|s_{k-1}) \cdot A_{ik} \\
    &\quad + \sum_k \nabla_\theta \log \pi_d(a_k|s_k) \cdot A_{ik}
    \end{align*}
};

\node[font=\scriptsize, text width=5cm, align=center] at (0,-3.2) {
    \textcolor{green!60!black}{\textbf{balanced gradient attribution}}
};

\node[reward, fill=red!20] (reward_right) at (7,4) {KL Gradient\\$\nabla_\theta J_{\text{KL}}$};
\node[block, fill=red!10, text width=6.5em] (decomp_right) at (7,2.3) {$\sum [\nabla_\theta \log \pi \cdot Q^{\pi,\text{KL}}]$};
\draw[arrow] (reward_right) -- (decomp_right) node[midway,right,font=\tiny] {$d_k$};

\node[draw, circle, fill=yellow!30, inner sep=1pt, font=\tiny] (q_sample) at (5.5,1.2) {$Q^{\text{KL}}_{\text{sample}}$};
\node[draw, circle, fill=yellow!30, inner sep=1pt, font=\tiny] (q_decision) at (8.5,1.2) {$Q^{\text{KL}}_d$};
\node[font=\tiny, color=red!60!black] at (7,0.7) {(Different $Q$)};

\node[policy, fill=red!10] (sample_right) at (5.5,-0.1) {$\nabla_\theta \pi_{\text{sample}}$};
\node[policy, fill=red!10] (decision_right) at (8.5,-0.1) {$\nabla_\theta \pi_d$};

\draw[arrow, red!60!black] (decomp_right) -- (q_sample);
\draw[arrow, red!60!black] (decomp_right) -- (q_decision);
\draw[arrow] (q_sample) -- (sample_right) node[midway,left,font=\tiny] {$\times Q^{\text{KL}}_{\text{sample}}$};
\draw[arrow] (q_decision) -- (decision_right) node[midway,right,font=\tiny] {$\times Q^{\text{KL}}_d$};

\draw[bidarrow, bend left=35, line width=1pt] (q_sample) to node[midway,above,font=\tiny] {} (q_decision);
\draw[bidarrow, bend right=35, line width=1pt] (q_sample) to node[midway,below,font=\tiny] {circular dependency} (q_decision);

\node[draw, fill=red!5, text width=6.2cm, font=\tiny, rounded corners, inner sep=3pt] at (7,-1.8) {
    \begin{align*}
    Q^{\text{KL}}_{\text{sample}} &= d^{\text{sample}} + \mathbb{E}_{\pi_d}[\gamma \cdot Q^{\text{KL}}_d] \\
    Q^{\text{KL}}_d &= d^{\text{decision}} + \mathbb{E}_{\pi_{\text{sample}}}[\gamma \cdot Q^{\text{KL}}_{\text{sample}}]
    \end{align*}
    \textcolor{red!60!black}{\textbf{$Q^{\text{KL}}_{\text{sample}} \not\approx Q^{\text{KL}}_d$ with circular dependency}}
};

\node[font=\scriptsize, text width=5cm, align=center] at (7,-3.2) {
    \textcolor{red!60!black}{\textbf{Imbalanced gradient attribution}}
};

\end{tikzpicture}
}

\end{figure*}

\clearpage

\section{Alternative Proof of Theorem 3.1 and Theorem 3.2}\label{app:alt_proof}

The probability of a trajectory $\tau$ under the combined policy can be expressed as:
\begin{equation}
\resizebox{1.0\hsize}{!}{$
\begin{aligned}
P(\tau \mid Q;\theta) &= \left(\prod_{k=1}^{T} \pi_{\text{sample}}(A_k, T_k \mid s_{k-1};\theta)\right) \\
&\quad \cdot \left(\prod_{k=1}^{T-1} \pi_d(\text{RESAMPLE} \mid s_k;\theta)\right) \cdot \pi_d(\text{STOP} \mid s_T;\theta)
\end{aligned}
$}
\end{equation}

Taking the logarithm:
\begin{equation}
\resizebox{1.0\hsize}{!}{$
\begin{aligned}
\log P(\tau \mid Q;\theta) &= \sum_{k=1}^{T} \log \pi_{\text{sample}}(A_k, T_k \mid s_{k-1};\theta) \\
&\quad + \sum_{k=1}^{T-1} \log \pi_d(\text{RESAMPLE} \mid s_k;\theta) + \log \pi_d(\text{STOP} \mid s_T;\theta)
\end{aligned}
$}
\end{equation}

\textbf{Important:} The sampling log-probability $\log \pi_{sample}(A_k, T_k|s_{k-1})$ decomposes at the token level as:
\begin{equation}
\resizebox{1.0\hsize}{!}{$
\begin{aligned}
\log \pi_{\text{sample}}(A_k, T_k \mid s_{k-1}) &= \sum_{j=1}^{L_k} \log \pi_{\text{sample}}(\text{token}_{k,j} \mid s_{k-1}, \\
&\quad \text{token}_{k,1}, \ldots, \text{token}_{k,j-1})
\end{aligned}
$}
\end{equation}

The gradient with respect to $\theta$:
\begin{equation}
\resizebox{1.0\hsize}{!}{$
\begin{aligned}
\nabla_\theta \log P(\tau \mid Q;\theta) &= \sum_{k=1}^{T} \nabla_\theta \log \pi_{\text{sample}}(A_k, T_k \mid s_{k-1};\theta) \\
&\quad + \sum_{k=1}^{T-1} \nabla_\theta \log \pi_d(\text{RESAMPLE} \mid s_k;\theta) + \nabla_\theta \log \pi_d(\text{STOP} \mid s_T;\theta)
\end{aligned}
$}
\end{equation}

For convenience, we denote $A_0, T_0 = \emptyset$.

\subsection{Policy Gradient Decomposition}

Using the policy gradient theorem adapted to our factorized policy structure (see Appendix~\ref{app:proof_pg}):
\begin{equation}
\resizebox{1.0\hsize}{!}{$
\begin{aligned}
\nabla_\theta J(\theta) &= \mathbb{E}_{\tau \sim \pi_\theta}\Bigg[\sum_{t=0}^{T} \Bigg(\nabla_\theta \log \pi_{\text{sample},\theta}(a'_t \mid s_t) \cdot Q^\pi(s_t, a'_t) \\
&\quad + \nabla_\theta \log \pi_{d,\theta}(a''_t \mid s_t) \cdot Q^\pi(s_t, a''_t)\Bigg)\Bigg]
\end{aligned}
$}
\end{equation}

\subsection{Attribution Analysis for Surrogate Reward}

\paragraph{Surrogate Reward Term.}
\begin{equation}
    L_{clip}(\theta) = \mathbb{E}_{\tau_i \sim \pi_{old}}\left[\frac{\pi_\theta(\tau_i|Q_i)}{\pi_{old}(\tau_i|Q_i)} A_i\right]
\end{equation}

where $A_i$ is the group-relative advantage computed via GRAE:
\begin{equation}
    A_i = \frac{R(\tau_i) - \text{mean}(R(\tau_1), \ldots, R(\tau_G))}{\text{std}(R(\tau_1), \ldots, R(\tau_G))}
\end{equation}

The probability ratio factorizes as:
\begin{equation}
\resizebox{1.0\hsize}{!}{$
\begin{aligned}
\frac{\pi_\theta(\tau_i \mid Q_i)}{\pi_{\text{old}}(\tau_i \mid Q_i)} &= \underbrace{\left(\prod_{k=1}^{T_i} \frac{\pi_{\theta,\text{sample}}(A_k, T_k \mid s_{k-1})}{\pi_{\text{old},\text{sample}}(A_k, T_k \mid s_{k-1})}\right)}_{\text{Sampling ratio}} \\
&\quad \cdot \underbrace{\left(\prod_{k=1}^{T_i-1} \frac{\pi_{\theta,d}(\text{RESAMPLE} \mid s_k)}{\pi_{\text{old},d}(\text{RESAMPLE} \mid s_k)}\right)}_{\text{Resampling ratio}} \\
&\quad \cdot \underbrace{\frac{\pi_{\theta,d}(\text{STOP} \mid s_{T_i})}{\pi_{\text{old},d}(\text{STOP} \mid s_{T_i})}}_{\text{Stopping ratio}}
\end{aligned}
$}
\end{equation}

Neglecting the clipping for analytical tractability, the gradient is:
\begin{equation}
\resizebox{1.0\hsize}{!}{$
\begin{aligned}
\nabla_\theta L_{\text{reward}} &\propto A_i \Bigg[\sum_{k=1}^{T_i} \nabla_\theta \log \pi_{\theta,\text{sample}}(a'_k \mid s_{k-1}) \\
&\quad + \sum_{k=0}^{T_i} \nabla_\theta \log \pi_{\theta,d}(a''_k \mid s_k)\Bigg]
\end{aligned}
$}
\end{equation}

\textbf{Balanced attribution:} The advantage $A_i$ multiplies both sampling and decision gradients equally. Both Q-values reduce to the same sufficient statistic:
\begin{equation}
    Q_d^{reward} = Q_{sample}^{reward} \sim \gamma^{\sum_{j=0}^{T_i-k} len(A_{k+j}, T_{k+j})} A_i
\end{equation}

\subsection{Attribution Analysis for KL Penalty}

\paragraph{Token-Level KL Decomposition.}
The KL penalty decomposes at the token level:
\begin{equation}
\resizebox{1.0\hsize}{!}{$
\begin{aligned}
D_{KL}^{\text{token}}(\tau_i) &= \sum_{k=1}^{T_i} \underbrace{\sum_{j=1}^{L_k} \log \frac{\pi_{\text{sample},\theta}(\text{token}_{k,j} \mid \cdot)}{\pi_{\text{sample},\text{ref}}(\text{token}_{k,j} \mid \cdot)}}_{d_k^{\text{sample}}} \\
&\quad + \sum_{k=1}^{T_i} \underbrace{\log \frac{\pi_{d,\theta}(a_k \mid s_k)}{\pi_{d,\text{ref}}(a_k \mid s_k)}}_{d_k^{\text{decision}}}
\end{aligned}
$}
\end{equation}

The immediate penalties are:
\begin{itemize}
    \item \textbf{Sampling:} $d_k^{sample} = \sum_{j=1}^{L_k} \log(\pi_\theta/\pi_{ref})$ is a sum of $L_k$ terms $\Rightarrow O(L_k)$
    \item \textbf{Decision:} $d_k^{decision} = \log(\pi_\theta/\pi_{ref})$ is a single term $\Rightarrow O(1)$
\end{itemize}

\paragraph{Q-Value Recursions.}
Working backwards from the terminal state:

\textbf{Value of final STOP action:}
\begin{equation}
    Q_d^{KL}(s_{T_i}, \text{STOP}) = d_{T_i}^{decision} \sim O(1)
\end{equation}

\textbf{Value of preceding sampling action:}
\begin{equation}
\resizebox{1.0\hsize}{!}{$
\begin{aligned}
Q_{\text{sample}}^{\text{KL}}(s_{T_i-1}, a'_{T_i}) &= d_{T_i}^{\text{sample}} + \gamma \cdot Q_d^{\text{KL}}(s_{T_i}, \text{STOP}) \\
&\sim O(L_{T_i}) + O(1) \approx O(L_{T_i})
\end{aligned}
$}
\end{equation}

\textbf{Value of RESAMPLE action:}
\begin{equation}
\resizebox{1.0\hsize}{!}{$
\begin{aligned}
Q_d^{\text{KL}}(s_k, \text{RESAMPLE}) &= d_k^{\text{decision}} + \gamma \cdot \mathbb{E}_{\pi_\theta}[Q_{\text{sample}}^{\text{KL}}(s_k, a'_{k+1})] \\
&\sim O(1) + O(\gamma L_{k+1})
\end{aligned}
$}
\end{equation}

\paragraph{Asymmetric Gradient Attribution.}
The gradient contributions are:
\begin{equation}
\resizebox{1.0\hsize}{!}{$
\begin{aligned}
\text{Sampling:} \quad & \sum_{k=1}^{T_i} \nabla_\theta \log \pi_{\text{sample},\theta}(a'_k \mid s_{k-1}) \cdot Q_{\text{sample}}^{\text{KL}}(s_{k-1}, a'_k) \\
\text{Decision:} \quad & \sum_{k=1}^{T_i} \nabla_\theta \log \pi_{d,\theta}(a''_k \mid s_k) \cdot Q_d^{\text{KL}}(s_k, a''_k)
\end{aligned}
$}
\end{equation}

The Q-values have incompatible sufficient statistics:
\begin{itemize}
    \item $Q_{sample}^{KL}$: dominated by immediate $O(L_k)$ penalty
    \item $Q_d^{KL}$: immediate penalty is $O(1)$, accumulates future sampling penalties
\end{itemize}

This structural asymmetry arising from the sum over $L_k$ tokens in sampling versus single-token decisions creates \textbf{Unbalanced Gradient Attribution}.

\newpage
\section{Numerical Illustration of Gradient Attribution}\label{app:numerical}

To make the theoretical derivations in the preceding sections concrete, this section presents a simplified numerical example focused on a key scenario: where correct and incorrect answers have the same length. The objective is to trace the flow of gradients from the two primary components of the Group Relative Policy Optimization (GRPO) objective---the surrogate reward\footnote{We also simplify the clipped part away} and the KL penalty---back to the parameters of the sampling policy ($\pi_{sample}$) and the decision policy ($\pi_{d}$).

This exercise will quantitatively demonstrate that the asymmetric regularization effect is not dependent on incorrect answers being more verbose. Instead, it is an architectural consequence of the length-weighted formulation of the KL penalty, which applies a significant penalty to any long generated sequence from $\pi_{sample}$ while applying a negligible penalty to the single-step actions of $\pi_{d}$.

\subsection{Scenario and Policy Parameterization}

We establish a minimal yet illustrative scenario to examine the gradient dynamics under the condition of equal answer lengths.

\paragraph{Problem Setup} The model is tasked with a simple arithmetic problem: "What is 7 * 8?".

\paragraph{Action Space}
\begin{itemize}
    \item \textbf{Sampling Policy ($\pi_{sample}$):} Generates an answer. We consider two outcomes: a Correct answer (C), "56", or a Wrong answer (W), "54".
    \item \textbf{Decision Policy ($\pi_{d}$):} Evaluates the generated answer and chooses to STOP or RESAMPLE.
\end{itemize}

\paragraph{Policy Parameterization} Each policy choice is modeled using a logistic sigmoid function, $\sigma(x) = 1 / (1 + e^{-x})$, applied to a single learnable logit ($\theta$).
\begin{itemize}
    \item \textbf{Sampling Policy ($\pi_{sample}$):} Governed by a single parameter, $\theta_s$. The probability of generating a correct answer is $P(C|\theta_s) = \sigma(\theta_s)$. We initialize $\theta_s = 0.4$, yielding $P(C) \approx 0.5987$ and $P(W) = 1 - P(C) \approx 0.4013$.
    \item \textbf{Decision Policy ($\pi_{d}$):} Conditioned on the answer from $\pi_{sample}$.
    \begin{itemize}
        \item For a correct answer, the stop probability is $P(\text{STOP}|C, \theta_{d,C}) = \sigma(\theta_{d,C})$. We initialize $\theta_{d,C} = 2.2$, yielding $P(\text{STOP}|C) \approx 0.9002$.
        \item For a wrong answer, the resample probability is $P(\text{RESAMPLE}|W, \theta_{d,W}) = \sigma(\theta_{d,W})$. We initialize $\theta_{d,W} = 1.4$, yielding $P(\text{RESAMPLE}|W) \approx 0.8022$.
    \end{itemize}
\end{itemize}

\paragraph{Length Assignment} This is the critical modification for this example. We set the lengths of both correct and incorrect answers to be equal and significant.
\begin{itemize}
    \item Length of Correct answer: $len(C) = 8$ tokens.
    \item Length of Wrong answer: $len(W) = 8$ tokens.
\end{itemize}

\paragraph{Reference Policy ($\pi_{orig}$)} The KL penalty regularizes the current policy ($\pi_{\theta}$) against a reference policy ($\pi_{orig}$), typically the initial SFT model.
\begin{itemize}
    \item $\theta_{s,orig} = 0.3 \implies P_{orig}(C) \approx 0.5744, P_{orig}(W) \approx 0.4256$.
    \item $\theta_{d,C,orig} = 2.0 \implies P_{orig}(\text{STOP}|C) \approx 0.8808$.
    \item $\theta_{d,W,orig} = 1.2 \implies P_{orig}(\text{RESAMPLE}|W) \approx 0.7685$.
\end{itemize}

\subsection*{Part 1: Symmetric Gradient Push from the Surrogate Reward}

First, we analyze the gradient from the surrogate reward. This calculation is entirely independent of sequence length, demonstrating its symmetric nature.

\paragraph{Trajectory and Advantage} We consider a single successful trajectory $\tau_i$: the model initially generates a Wrong answer, correctly chooses to RESAMPLE, then generates a Correct answer and correctly chooses to STOP.
\
GRPO is a "critic-less" algorithm that estimates advantage relative to a group of trajectories. We assume this trajectory is better than the group average and assign it a normalized advantage of $A_i = +0.5$.

\paragraph{Gradient Calculation} The gradient of the reward objective is $\nabla_{\theta} J_{Reward}(\theta) = A_i \cdot \nabla_{\theta} \log P(\tau_i|\theta)$. The log-probability of the trajectory is:
\
The gradient for each parameter is calculated as follows:
\begin{itemize}
    \item \textbf{For $\theta_s$ (Sampling Policy):}
    $\begin{aligned}[t]
        \nabla_{\theta_s} J_{\text{Reward}} &= A_i \cdot \\
        &= 0.5 \cdot [-\sigma(\theta_s) + (1 - \sigma(\theta_s))] \\
        &= 0.5 \cdot [-0.5987 + 0.4013] = \mathbf{-0.0987}
    \end{aligned}$

    \item \textbf{For $\theta_{d,W}$ (Decision Policy):}
    $\begin{aligned}[t]
        \nabla_{\theta_{d,W}} J_{\text{Reward}} &= A_i \cdot \nabla_{\theta_{d,W}}\log P(\text{RESAMPLE}|W, \theta_{d,W}) \\
        &= 0.5 \cdot (1 - \sigma(\theta_{d,W})) \\
        &= 0.5 \cdot (1 - 0.8022) = \mathbf{+0.0989}
    \end{aligned}$

    \item \textbf{For $\theta_{d,C}$ (Decision Policy):}
    $\begin{aligned}[t]
        \nabla_{\theta_{d,C}} J_{\text{Reward}} &= A_i \cdot \nabla_{\theta_{d,C}}\log P(\text{STOP}|C, \theta_{d,C}) \\
        &= 0.5 \cdot (1 - \sigma(\theta_{d,C})) \\
        &= 0.5 \cdot (1 - 0.9002) = \mathbf{+0.0499}
    \end{aligned}$
\end{itemize}

The advantage $A_i$ is applied as a scalar multiplier to all responsible parameters. The resulting gradients are of a similar order of magnitude, demonstrating a \textit{symmetric push} for improvement.

\subsection*{Part 2: Asymmetric Gradient Drag from the KL Penalty}

Next, we analyze the gradient from the KL penalty. The asymmetry arises because the immediate penalty for a sampling action is explicitly weighted by its length, whereas the penalty for a decision action is not.

\paragraph{Calculating KL Penalties and Q-Values} We first calculate the immediate KL penalty ($d_k$) for each step in our trajectory. This measures the "information loss" when the current policy deviates from the reference policy.
\begin{itemize}
    \item \textbf{Step 1 - Sample W:} $d_1^{sample} = 8 \cdot \left(-\log\frac{0.4013}{0.4256}\right) = 8 \cdot (0.0588) \approx +0.4704$
    \item \textbf{Step 1 - Resample:} $d_1^{decision} = -\log\frac{0.8022}{0.7685} \approx -0.0429$
    \item \textbf{Step 2 - Sample C:} $d_2^{sample} = 8 \cdot \left(-\log\frac{0.5987}{0.5744}\right) = 8 \cdot (-0.0415) \approx -0.3320$
    \item \textbf{Step 2 - Stop:} $d_2^{decision} = -\log\frac{0.9002}{0.8808} \approx -0.0218$
\end{itemize}
Next, we compute the state-action values ($Q^{\pi,KL}$) by working backward from the end of the trajectory (with discount factor $\gamma=1$).
\begin{itemize}
    \item \textbf{Value of final action (STOP):} $Q_{KL}(s_2, \text{STOP}) = d_2^{decision} = -0.0218$
    \item \textbf{Value of preceding action (Sample C):} $Q_{KL}(s_1, \text{sample } C) = d_2^{sample} + Q_{KL}(s_2, \text{STOP}) = -0.3320 - 0.0218 = -0.3538$
    \item \textbf{Value of preceding action (RESAMPLE):} $Q_{KL}(s_1, \text{RESAMPLE}) = d_1^{decision} + Q_{KL}(s_1, \text{sample } C) = -0.0429 - 0.3538 = -0.3967$
    \item \textbf{Value of first action (Sample W):} $Q_{KL}(s_0, \text{sample } W) = d_1^{sample} + Q_{KL}(s_1, \text{RESAMPLE}) = 0.4704 - 0.3967 = +0.0737$
\end{itemize}

\paragraph{Gradient Calculation} The gradient of the KL penalty objective is proportional to $\sum_{t} \nabla_{\theta} \log \pi_t \cdot Q_{KL,t}$. This gradient acts as a "drag" force, pulling the policy back toward the reference.

\begin{itemize}
    \item \textbf{For $\theta_s$ (Sampling Policy):}
    
    \noindent
    \resizebox{0.9\hsize}{!}{$
    \begin{aligned}
        \nabla_{\theta_s} J_{\text{KL}} &\propto \nabla_{\theta_s}\log P(W|\theta_s) \cdot Q_{\text{KL}}(s_0, W) \\
        &\quad + \nabla_{\theta_s}\log P(C|\theta_s) \cdot Q_{\text{KL}}(s_1, C) \\
        &\propto (-\sigma(\theta_s)) \cdot (0.0737) + (1-\sigma(\theta_s)) \cdot (-0.3538) \\
        &\propto (-0.5987) \cdot (0.0737) + (0.4013) \cdot (-0.3538) \\
        &= -0.0441 - 0.1420 = \mathbf{-0.1861}
    \end{aligned}
    $}

    \item \textbf{For $\theta_{d,W}$ (Decision Policy):}
    
    \noindent
    \resizebox{0.9\hsize}{!}{$
    \begin{aligned}
        \nabla_{\theta_{d,W}} J_{\text{KL}} &\propto \nabla_{\theta_{d,W}}\log P(\text{RESAMPLE}|W) \cdot Q_{\text{KL}}(s_1, \text{RESAMPLE}) \\
        &\propto (1 - \sigma(\theta_{d,W})) \cdot (-0.3967) \\
        &= (0.1978) \cdot (-0.3967) = \mathbf{-0.0785}
    \end{aligned}
    $}

    \item \textbf{For $\theta_{d,C}$ (Decision Policy):}
    
    \noindent
    \resizebox{0.9\hsize}{!}{$
    \begin{aligned}
        \nabla_{\theta_{d,C}} J_{\text{KL}} &\propto \nabla_{\theta_{d,C}}\log P(\text{STOP}|C) \cdot Q_{\text{KL}}(s_2, \text{STOP}) \\
        &\propto (1 - \sigma(\theta_{d,C})) \cdot (-0.0218) \\
        &= (0.0998) \cdot (-0.0218) = \mathbf{-0.0022}
    \end{aligned}
    $}
\end{itemize}

The results are starkly different from the reward gradients. The gradient magnitude for the sampling policy, $|\nabla_{\theta_s} J_{KL}| \approx 0.186$, is more than double that for the first decision parameter ($|\nabla_{\theta_{d,W}} J_{KL}| \approx 0.079$) and over 80 times larger than for the second ($|\nabla_{\theta_{d,C}} J_{KL}| \approx 0.002$). This is the \textit{asymmetric drag}.

\begin{table*}[h!]
\centering
\caption{Breakdown of KL Penalty Gradient Attribution}
\label{tab:kl_breakdown}
\resizebox{\textwidth}{!}{%
\begin{tabular}{@{}llcccccc@{}}
\toprule
Step & Action & $d_k$ (Penalty) & $V_{KL}(s')$ (Future) & $Q_{KL}$ (Total) & $\nabla_{\theta}\log\pi$ & Parameter & Contribution \\
\midrule
2 & STOP & -0.0218 & 0 & \textbf{-0.0218} & $1 - \sigma(\theta_{d,C}) = 0.0998$ & $\theta_{d,C}$ & -0.0022 \\
2 & Sample C & -0.3320 & -0.0218 & \textbf{-0.3538} & $1 - \sigma(\theta_s) = 0.4013$ & $\theta_s$ & -0.1420 \\
1 & RESAMPLE & -0.0429 & -0.3538 & \textbf{-0.3967} & $1 - \sigma(\theta_{d,W}) = 0.1978$ & $\theta_{d,W}$ & -0.0785 \\
1 & Sample W & +0.4704 & -0.3967 & \textbf{+0.0737} & $-\sigma(\theta_s) = -0.5987$ & $\theta_s$ & -0.0441 \\
\bottomrule
\end{tabular}%
}
\end{table*}

\subsection{Synthesis: The Net Parameter Update}

The final update to the policy parameters is the sum of the gradients from the surrogate reward (the "push") and the KL penalty (the "drag").

\begin{table*}[h!]
\centering
\caption{Summary of Gradient Attribution and Net Update}
\label{tab:net_update}
\begin{tabular}{@{}lccccc@{}}
\toprule
Parameter & Policy & Reward (Push) & KL (Drag) & Net Gradient & Interpretation \\
\midrule
$\theta_s$ & $\pi_{sample}$ & -0.0987 & -0.1861 & \textbf{-0.2848} & Heavily regularized; stable. \\
$\theta_{d,W}$ & $\pi_{d}$ & +0.0989 & -0.0785 & \textbf{+0.0204} & Learning signal overcomes drag. \\
$\theta_{d,C}$ & $\pi_{d}$ & +0.0499 & -0.0022 & \textbf{+0.0477} & Learning driven by reward. \\
\bottomrule
\end{tabular}
\end{table*}

\section{Gradient Attribution Properties of SFT and DFT}\label{app:sft_dft}

We analyze the gradient attribution properties of Supervised Fine-Tuning (SFT) and Dynamic Fine-Tuning (DFT) within our two-stage decision-sampling framework. We show that SFT exhibits imbalanced gradient attribution due to policy-entangled Q-values, while DFT removes this entanglement, achieving improved (though not perfectly balanced) gradient attribution.

\subsection{SFT as Policy Gradient with Implicit Reward}
\begin{lemma}
The SFT gradient is equivalent to a policy gradient with implicit per-token reward:
\begin{equation}
\resizebox{1.0\hsize}{!}{$
\begin{aligned}
\nabla_\theta \mathcal{L}_{\text{SFT}} &= \mathbb{E}_{\tau \sim \mathcal{D}}\Bigg[\sum_{k=1}^{T} \sum_{j=1}^{L_k} \nabla_\theta \log \pi_{\text{sample}}(\text{token}_{k,j} \mid \cdot) \\
&\quad + \sum_{k=1}^{T} \nabla_\theta \log \pi_d(a_k \mid s_k)\Bigg]
\end{aligned}
$}
\end{equation}
where the implicit reward at each token is $r_t = 1$, but weighted by $1/\pi_\theta$ in the gradient (since $\nabla_\theta \log \pi = \frac{1}{\pi}\nabla_\theta \pi$).
\end{lemma}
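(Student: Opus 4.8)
The plan is to start from the SFT objective written as the expected trajectory log-likelihood under the demonstration distribution, $\mathcal{L}_{\text{SFT}}(\theta) = \mathbb{E}_{\tau \sim \mathcal{D}}[\log P(\tau \mid Q; \theta)]$ (we take $\mathcal{L}_{\text{SFT}}$ as the log-likelihood to be maximized; the negative-log-likelihood convention only flips an overall sign). First I would apply Lemma~3.1 to factor $P(\tau\mid Q;\theta)$ into the product of $\pi_{\text{sample}}$ and $\pi_d$ factors, then take $\log$ and $\nabla_\theta$ exactly as in Corollary~3.1, which already delivers the split into a sampling gradient $\sum_{k=1}^T \nabla_\theta \log \pi_{\text{sample}}(A_k,T_k\mid s_{k-1};\theta)$ and a decision gradient $\sum_{k=0}^T \nabla_\theta \log \pi_d(a_k\mid s_k;\theta)$. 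The only extra step is the autoregressive chain rule inside each sampling action, $\log \pi_{\text{sample}}(A_k,T_k\mid s_{k-1}) = \sum_{j=1}^{L_k}\log \pi_{\text{sample}}(\text{token}_{k,j}\mid s_{k-1},\text{token}_{k,<j})$, which is the identity already invoked in Appendix~B; substituting it produces the double sum $\sum_{k=1}^T\sum_{j=1}^{L_k}$ in the claimed display, and the $k=0$ decision term vanishes because $\pi_d(a_0\mid s_0)=1$, so the decision sum reduces to $k=1,\dots,T$. Linearity of expectation then gives the stated identity verbatim.

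Second, for the implicit-reward reading I would rewrite each gradient term via the log-derivative identity $\nabla_\theta \log \pi_\theta(a\mid s) = \frac{1}{\pi_\theta(a\mid s)}\nabla_\theta \pi_\theta(a\mid s)$ and compare with the policy-gradient template $\nabla_\theta J = \mathbb{E}_{\tau\sim\pi_\theta}\big[\sum_t r_t\,\nabla_\theta \log \pi_\theta(a_t\mid s_t)\big]$. Matching term-by-term --- after the change of measure $\mathbb{E}_{\tau\sim\mathcal{D}}[\,\cdot\,] = \mathbb{E}_{\tau\sim\pi_\theta}\!\big[\tfrac{P_\mathcal{D}(\tau)}{P(\tau\mid Q;\theta)}\,\cdot\,\big]$ --- exhibits SFT as a policy gradient whose per-token signal is an outcome-independent constant ($r_t \equiv 1$) but whose effective per-token weight carries the extra factor $1/\pi_\theta(a_t\mid s_t)$; that is, the ``reward'' is dense (one unit at every token) and \emph{policy-entangled}, recovering the Wu et al.\ characterization and setting up the claim that the induced Q-values depend on future \emph{policy probabilities} rather than future \emph{outcomes}.

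I expect the main obstacle to be phrasing the ``equivalence'' precisely rather than computing it. The factor $1/\pi_\theta$ is not a reward function of the MDP --- it varies with $\theta$ --- so ``equivalent to a policy gradient with implicit reward $r_t=1$ weighted by $1/\pi_\theta$'' must be read as an identity of gradient \emph{expressions}, not of optimization problems; I would state it at exactly that level and flag that the induced per-token Q-value is a sum of $1/\pi_\theta$ terms over the remaining tokens, hence no single $V^\pi$ can simultaneously serve $\pi_{\text{sample}}$ and $\pi_d$. A secondary point to handle cleanly is the on-policy versus off-policy status of $\mathcal{D}$: the importance weight above is the identity only when $\mathcal{D}=\pi_\theta$ (self-distillation), and otherwise it is itself policy-entangled, which only strengthens the conclusion. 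Finally I would note in passing that the double sum $\sum_{k}\sum_{j=1}^{L_k}$ over sampling tokens against the single sum $\sum_k$ over decision tokens reproduces exactly the $O(L_k)$-versus-$O(1)$ structural asymmetry of the KL analysis, which is what the subsequent Unbalanced Gradient Attribution claim for SFT will exploit.
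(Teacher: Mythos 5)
Your proposal is correct and follows essentially the same route the paper takes (the paper states this lemma without a separate proof, but its implicit derivation is exactly your combination of Lemma~3.1, Corollary~3.1, the token-level autoregressive chain rule from Appendix~B, and the log-derivative identity $\nabla_\theta \log\pi = \frac{1}{\pi}\nabla_\theta\pi$). Your added caveats --- that the ``equivalence'' is an identity of gradient expressions rather than of MDP reward functions, and that the expectation is over $\mathcal{D}$ rather than $\pi_\theta$ --- are accurate refinements of the paper's informal phrasing, not a departure from it.
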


\subsection{SFT: Imbalanced gradient attribution via Policy-Entanglement}
\begin{theorem}[SFT has Imbalanced gradient attribution]
For SFT, the implicit Q-values are:
\begin{equation}
\resizebox{1.0\hsize}{!}{$
\begin{aligned}
Q^{\text{SFT}}_{\text{sample}}(s_{k-1}, \text{token}_{k,j}) &= \underbrace{\sum_{j'=j+1}^{L_k} \frac{1}{\pi_{\text{sample}}(\text{token}_{k,j'})}}_{\text{remaining tokens in step } k} + \underbrace{\frac{1}{\pi_d(a_k \mid s_k)}}_{\text{decision at step } k} \\
&\quad + \underbrace{\sum_{k'=k+1}^{T}\left(\sum_{j'=1}^{L_{k'}} \frac{1}{\pi_{\text{sample}}(\text{token}_{k',j'})} + \frac{1}{\pi_d(a_{k'} \mid s_{k'})}\right)}_{\text{future steps}} \\
Q^{\text{SFT}}_d(s_k, a_k) &= \sum_{k'=k+1}^{T}\left(\sum_{j'=1}^{L_{k'}} \frac{1}{\pi_{\text{sample}}(\text{token}_{k',j'})} + \frac{1}{\pi_d(a_{k'} \mid s_{k'})}\right)
\end{aligned}
$}
\end{equation}
These Q-values exhibit \textbf{policy-entanglement}:
\begin{itemize}
    \item $Q^{\text{SFT}}_{\text{sample}}$ depends on $\pi_d$ through the $1/\pi_d(a_k | s_k)$ and $1/\pi_d(a_{k'} | s_{k'})$ terms.
    \item $Q^{\text{SFT}}_d$ depends on $\pi_{\text{sample}}$ through the $1/\pi_{\text{sample}}(\text{token}_{k',j'})$ terms.
\end{itemize}
\end{theorem}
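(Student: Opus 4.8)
The plan is to derive the two displayed $Q^{\text{SFT}}$ formulas directly from the SFT-as-policy-gradient lemma just stated, then read the entanglement off by inspection and match the result against the imperfect-attribution criterion of Definition~3.1.

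First I would take the SFT gradient in its factorized form (a sum of sampling-token gradients plus decision gradients) and compare it, term by term, with the factorized policy gradient theorem of Section~3, which writes the gradient as a sum in which each $\nabla_\theta\log\pi_{\text{sample}}(\text{token}_{k,j}\mid\cdot)$ and each $\nabla_\theta\log\pi_d(a_k\mid s_k)$ is multiplied by its own Q-value. Using the log-derivative identity $\nabla_\theta\log\pi = \frac{1}{\pi}\nabla_\theta\pi$ from the preceding lemma, the effective per-step reward is $1/\pi_\theta(a_t\mid s_t)$; with $\gamma=1$ the matched Q-value is then the reward-to-go, i.e. the sum of $1/\pi_\theta(\cdot)$ over every token and decision that follows $(s_t,a_t)$ along the data trajectory.

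Next I would specialize this reward-to-go to the two action types using the trajectory factorization. For a sampling token $(k,j)$ the remaining tokens of step $k$ contribute $\sum_{j'=j+1}^{L_k} 1/\pi_{\text{sample}}(\text{token}_{k,j'})$, the decision $a_k$ contributes $1/\pi_d(a_k\mid s_k)$, and steps $k+1,\dots,T$ contribute the bracketed block, which is exactly the stated $Q^{\text{SFT}}_{\text{sample}}$; for a decision $a_k$ at $s_k$ only steps $k+1,\dots,T$ remain, giving $Q^{\text{SFT}}_d$. The entanglement claims are then immediate: $Q^{\text{SFT}}_{\text{sample}}$ carries $1/\pi_d$ factors, so it is a functional of the decision policy, and $Q^{\text{SFT}}_d$ carries $1/\pi_{\text{sample}}$ factors, so it is a functional of the sampling policy. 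To land the theorem's title, I would finally note that a full sampling step aggregates $L_k$ such $1/\pi$ terms whereas a decision contributes a single one, so the immediate contributions scale as $O(L_k)$ versus $O(1)$ — the same obstruction to a shared scale-invariant sufficient statistic $\Phi$ that blocks balanced attribution in the KL case (Appendix~A.5) — which places SFT in the unbalanced branch of Definition~3.1.

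The step I expect to be the main obstacle is the reinterpretation itself: SFT trajectories are drawn from the data distribution, not from $\pi_\theta$, so the implicit-reward / Q-value identification is not the policy gradient theorem applied verbatim but a coefficient-matching statement valid one gradient step at a time, with the implicit reward $1/\pi_\theta$ \emph{frozen} at the current parameters. I would need to fix this frozen-reward convention carefully, justify the reward-to-go bookkeeping — in particular why $Q^{\text{SFT}}_{\text{sample}}(s_{k-1},\text{token}_{k,j})$ excludes the reward of token $(k,j)$ itself, i.e. that the index ranges are consistent with the state/action pairing used in the factorized policy gradient theorem — and argue that it is the policy-\emph{dependence} of this frozen reward, not merely its size, that prevents a single $\Phi$ from serving both components, which is what distinguishes this mechanism from the KL case where the reference policy is fixed.
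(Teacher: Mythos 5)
Your proposal follows essentially the same route as the paper's Appendix~D proof: identify the implicit per-token reward $1/\pi_\theta$ via the log-derivative identity, compute the reward-to-go (the paper does this as a backward recursion from $Q^{\text{SFT}}_d(s_T,\text{STOP})=0$), and read the policy-entanglement off the resulting expressions. The obstacle you flag at the end is real and is in fact glossed over in the paper's own proof --- the paper invokes the policy-gradient form verbatim even though SFT trajectories come from $\mathcal{D}$ rather than $\pi_\theta$ and its stated Q-values (which exclude the immediate $1/\pi$ term) do not literally reproduce the SFT gradient when substituted back --- so your more careful coefficient-matching framing is, if anything, an improvement on the published argument rather than a deviation from it.
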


\begin{proof}
The SFT gradient can be written in policy gradient form:
\begin{equation}
    \nabla_\theta \mathcal{L}_{\text{SFT}} = \sum_t \nabla_\theta \log \pi_\theta(a_t | s_t) \cdot Q^{\text{SFT}}(s_t, a_t)
\end{equation}
where $Q^{\text{SFT}}(s_t, a_t)$ represents the "future value" from taking action $a_t$ at state $s_t$. Since each token contributes gradient $\nabla_\theta \log \pi = \nabla_\theta \pi / \pi$, the implicit reward is effectively $1/\pi$ weighted. Working backwards from the terminal state:

At the final decision (STOP at step $T$):
\begin{equation}
    Q^{\text{SFT}}_d(s_T, \text{STOP}) = 0
\end{equation}
At the final sampling step $T$, token $j$:

\begin{equation}
\resizebox{1.0\hsize}{!}{$
Q^{\text{SFT}}_{\text{sample}}(s_{T-1}, \text{token}_{T,j}) = \sum_{j'=j+1}^{L_T} \frac{1}{\pi_{\text{sample}}(\text{token}_{T,j'})} + \frac{1}{\pi_d(\text{STOP} \mid s_T)}
$}
\end{equation}

Recursively, for step $k < T$:
\begin{equation}
\resizebox{1.0\hsize}{!}{$
\begin{aligned}
Q^{\text{SFT}}_d(s_k, \text{RESAMPLE}) &= Q^{\text{SFT}}_{\text{sample}}(s_k, \text{token}_{k+1,1}) \\
Q^{\text{SFT}}_{\text{sample}}(s_{k-1}, \text{token}_{k,j}) &= \sum_{j'=j+1}^{L_k} \frac{1}{\pi_{\text{sample}}(\text{token}_{k,j'})} + \frac{1}{\pi_d(a_k \mid s_k)} + Q^{\text{SFT}}_d(s_k, a_k)
\end{aligned}
$}
\end{equation}
The policy-entanglement is evident: $Q^{\text{SFT}}_{\text{sample}}$ contains $1/\pi_d$ terms, and $Q^{\text{SFT}}_d$ contains $1/\pi_{\text{sample}}$ terms through the recursive dependency. These Q-values change as $\pi_\theta$ updates during training, preventing clean credit assignment.
\end{proof}

\begin{corollary}
No unified sufficient statistic $\Phi$ exists for SFT. Following the logic of Theorem 3.2, define:
\begin{align}
    \Phi^{\text{SFT}}_{\text{sample}}(s_k) &= f\left(\frac{1}{\pi_{\text{sample}}}, \frac{1}{\pi_d}, L\right) \\
    \Phi^{\text{SFT}}_d(s_k) &= g\left(\frac{1}{\pi_{\text{sample}}}, \frac{1}{\pi_d}, L\right)
\end{align}
Both depend on the full policy $\pi_\theta$, making $\Phi^{\text{SFT}}_{\text{sample}} \neq \Phi^{\text{SFT}}_d$ in general.
\end{corollary}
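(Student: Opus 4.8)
The plan is to read the two candidate sufficient statistics directly off the Q-value formulas of the preceding theorem, exhibit their dependence on the current policy, and then show that no single policy-independent statistic of the kind used in the balanced case can reproduce both Q-value families. First I would take the Bellman-style recursions established there --- $Q^{\text{SFT}}_d(s_k,\text{RESAMPLE}) = Q^{\text{SFT}}_{\text{sample}}(s_k,\text{token}_{k+1,1})$ and $Q^{\text{SFT}}_{\text{sample}}(s_{k-1},\text{token}_{k,j}) = \sum_{j'>j} 1/\pi_{\text{sample}}(\text{token}_{k,j'}) + 1/\pi_d(a_k|s_k) + Q^{\text{SFT}}_d(s_k,a_k)$ --- and define $\Phi^{\text{SFT}}_{\text{sample}}(s_k)$ as the portion of $Q^{\text{SFT}}_{\text{sample}}$ determined by states at or after $s_k$, and $\Phi^{\text{SFT}}_d(s_{k+1})$ analogously for $Q^{\text{SFT}}_d$. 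Unrolling the recursions shows each is a sum over the remaining trajectory of reciprocal token-probabilities $1/\pi_{\text{sample}}$ and reciprocal decision-probabilities $1/\pi_d$; in particular $\Phi^{\text{SFT}}_{\text{sample}}(s_k)$ carries the extra within-step block (the residual tokens of sampling step $k$ together with the decision term $1/\pi_d(a_k|s_k)$) that $\Phi^{\text{SFT}}_d(s_{k+1})$ lacks.

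The substantive step is the contrast with Theorem 3.1: there the statistic $\Phi(s_k) = \gamma^{\sum\text{len}(\cdot)}A_i$ references only the trajectory's outcome and token lengths --- never the probabilities $\pi_\theta$ assigns --- which is exactly the ``scale-invariance'' of Definition 3.1 and is what lets the unified network learn one coherent ``future value'' representation. Here, by contrast, both $\Phi^{\text{SFT}}_{\text{sample}}$ and $\Phi^{\text{SFT}}_d$ contain the $1/\pi_d$ terms of future decisions \emph{and} the $1/\pi_{\text{sample}}$ terms of future sampling steps, so each is a genuine function of the full pair $(\pi_{\text{sample}},\pi_d) = \pi_\theta$, not of the state alone. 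I would then turn this into a contradiction in the Theorem-3.2 style: suppose one map $\Phi:\mathcal{S}\to\mathbb{R}$ served both factorizations; calibrate it at a policy $\theta$ so the identities hold, then take a single gradient step to $\theta'$ --- the $1/\pi$ terms move while the state labels do not, so the same $\Phi$ can no longer satisfy the identities at $\theta'$, hence $\Phi$ cannot be a fixed function on $\mathcal{S}$. For the stated inequality $\Phi^{\text{SFT}}_{\text{sample}}\neq\Phi^{\text{SFT}}_d$, I would compare the two unrolled sums at matching steps: they are offset by the within-step block above, and since all the reciprocal probabilities are strictly positive and $L_k\geq 1$, this offset is strictly positive on any non-degenerate trajectory, so the two statistics cannot be identified. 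The corollary then follows, since a reward whose Q-values admit no shared scale-invariant $\Phi$ is, by Definition 3.1, exactly one with Unbalanced Gradient Attribution --- consistent with the preceding theorem.

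The main obstacle is making ``scale-invariant sufficient statistic'' precise enough that policy-dependence counts as a genuine failure rather than merely a different admissible choice of $\Phi$: Definition 3.1 writes $\Phi:\mathcal{S}\to\mathbb{R}$, and the cleanest defense is to demand that $\Phi$ be the \emph{same} function throughout training (as $\gamma^{\sum\text{len}(\cdot)}A_i$ is), so that a statistic which must be re-fit after every update is formally a function of state \emph{and} current parameters, not of $\mathcal{S}$ alone --- this is the precise content of ``no unified $\Phi$ exists.'' A secondary subtlety is the qualifier ``in general'': I would state it as holding in the generic regime ($L_k\geq 2$, all token- and decision-probabilities bounded away from $0$ and $1$), where the gap between $\Phi^{\text{SFT}}_{\text{sample}}$ and $\Phi^{\text{SFT}}_d$ is bounded below, rather than claiming it unconditionally, since degenerate instances (single-token, deterministic decisions) can collapse it. With the formalization fixed, the remaining steps --- unrolling the recursions and reading off the entangled sums --- are routine, so the proof should be presented as a short consequence of the preceding theorem rather than an independent argument.
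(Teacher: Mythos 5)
Your proposal is correct and follows the same route as the paper: read the candidate statistics off the policy-entangled Q-values of the preceding theorem and observe that both contain $1/\pi_{\text{sample}}$ and $1/\pi_d$ terms over the remaining trajectory. In fact you go further than the paper does — the paper offers no real proof of this corollary, only the assertion that policy-dependence of both statistics makes them unequal ``in general,'' which as stated is a non-sequitur (two quantities can both depend on $\pi_\theta$ and still coincide). Your two additions close that gap: (i) the strictly positive within-step offset $\sum_{j'>j} 1/\pi_{\text{sample}}(\text{token}_{k,j'}) + 1/\pi_d(a_k|s_k)$ separating $\Phi^{\text{SFT}}_{\text{sample}}(s_k)$ from $\Phi^{\text{SFT}}_d(s_{k+1})$ on any non-degenerate trajectory, and (ii) the observation that a statistic which must be re-fit after every parameter update is a function of $(\mathcal{S},\theta)$ rather than of $\mathcal{S}$ alone, which is the precise content of ``no unified $\Phi$ exists.'' Your caveat restricting the claim to the generic regime ($L_k \geq 2$, probabilities bounded away from $0$ and $1$) is also a sensible sharpening of the paper's unqualified ``in general.''
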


\subsection{DFT: Removing Policy-Entanglement}
Dynamic Fine-Tuning rescales each token's contribution by its probability:
\begin{equation}
\resizebox{1.0\hsize}{!}{$
\begin{aligned}
\mathcal{L}_{\text{DFT}}(\theta) &= \mathbb{E}_{\tau \sim \mathcal{D}}\Bigg[\sum_{k=1}^{T} \sum_{j=1}^{L_k} \pi_{\text{sample}}(\text{token}_{k,j}) \cdot \log \pi_{\text{sample}}(\text{token}_{k,j}) \\
&\quad + \sum_{k=1}^{T} \pi_d(a_k \mid s_k) \cdot \log \pi_d(a_k \mid s_k)\Bigg]
\end{aligned}
$}
\end{equation}

\begin{lemma}[Cancellation of $1/\pi$ weighting]
The DFT gradient satisfies:
\begin{equation}
    \pi \cdot \nabla_\theta \log \pi = \pi \cdot \frac{\nabla_\theta \pi}{\pi} = \nabla_\theta \pi
\end{equation}
Therefore:
\begin{equation}
\resizebox{1.0\hsize}{!}{$
\begin{aligned}
\nabla_\theta \mathcal{L}_{\text{DFT}} &= \mathbb{E}_{\tau \sim \mathcal{D}}\Bigg[\sum_{k=1}^{T} \sum_{j=1}^{L_k} \nabla_\theta \pi_{\text{sample}}(\text{token}_{k,j}) \\
&\quad + \sum_{k=1}^{T} \nabla_\theta \pi_d(a_k \mid s_k)\Bigg]
\end{aligned}
$}
\end{equation}
The $1/\pi$ weighting is exactly canceled.
\end{lemma}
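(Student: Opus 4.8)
The plan is to reduce the claim to the elementary log-derivative identity together with the stop-gradient convention built into the DFT objective. In $\mathcal{L}_{\text{DFT}}$ each sampling token and each decision contributes a term of the form $\pi(\cdot)\cdot\log\pi(\cdot)$ in which the prefactor $\pi(\cdot)$ is a \emph{detached} scalar weight — it is held constant when differentiating, exactly as in the original DFT formulation of \cite{wu2025generalizationsftreinforcementlearning}. Under this convention the gradient of a single term is $\pi(\cdot)\,\nabla_\theta\log\pi(\cdot)$, and since $\nabla_\theta\log\pi = \nabla_\theta\pi/\pi$ we obtain $\pi\cdot\nabla_\theta\log\pi = \nabla_\theta\pi$, which is the first display; everything after that is bookkeeping.

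Concretely I would proceed as follows. First, make explicit that the outer weight is a stop-gradient, so that $\nabla_\theta\big[\pi(\cdot)\log\pi(\cdot)\big] = \pi(\cdot)\,\nabla_\theta\log\pi(\cdot)$ rather than $(\log\pi(\cdot)+1)\,\nabla_\theta\pi(\cdot)$. Second, since $\mathcal{D}$ is a fixed offline dataset that does not depend on $\theta$ and each trajectory involves only finitely many tokens and decisions, interchange $\nabla_\theta$ with $\mathbb{E}_{\tau\sim\mathcal{D}}$ and with the finite sums $\sum_{k=1}^{T}\sum_{j=1}^{L_k}$ and $\sum_{k=1}^{T}$ (dominated convergence suffices under the standing smoothness assumptions on $\pi_\theta$). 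Third, apply the identity term by term: each sampling token gives $\pi_{\text{sample}}(\text{token}_{k,j})\,\nabla_\theta\log\pi_{\text{sample}}(\text{token}_{k,j}) = \nabla_\theta\pi_{\text{sample}}(\text{token}_{k,j})$, and each decision gives $\pi_d(a_k|s_k)\,\nabla_\theta\log\pi_d(a_k|s_k) = \nabla_\theta\pi_d(a_k|s_k)$. Fourth, collect the sampling and decision contributions to recover the stated expression for $\nabla_\theta\mathcal{L}_{\text{DFT}}$, and observe that the $1/\pi$ factor implicit in every $\nabla_\theta\log\pi$ has been exactly cancelled.

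There is no deep obstacle here; the one point that genuinely needs care is the stop-gradient convention, since treating the prefactor $\pi$ as $\theta$-dependent would destroy the cancellation. I would therefore state the convention up front and, if preferred, note the equivalent ``reweighted-gradient'' definition of DFT under which the identity holds by construction. Finally — to avoid overclaiming — I would flag that the cancellation established here concerns only the $1/\pi$ weighting: the residual length asymmetry, namely the $O(L_k)$ versus $O(1)$ imbalance between the number of sampling-token and decision terms, is \emph{not} removed by DFT, and this is exactly the gap addressed by the corollary and discussion that follow.
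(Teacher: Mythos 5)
Your proposal is correct and takes essentially the same route as the paper, whose ``proof'' is just the one-line log-derivative identity $\pi\cdot\nabla_\theta\log\pi=\nabla_\theta\pi$ applied term by term. Your explicit statement of the stop-gradient convention on the prefactor $\pi$ is a worthwhile sharpening the paper omits: if the weight were treated as $\theta$-dependent, the product rule would yield $(\log\pi+1)\,\nabla_\theta\pi$ instead of $\nabla_\theta\pi$, so the lemma as literally written holds only under exactly the convention you make explicit.
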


\begin{theorem}[DFT Removes Policy-Entanglement]
For DFT, the implicit Q-values become policy-independent:
\begin{equation}
\resizebox{1.0\hsize}{!}{$
\begin{aligned}
Q^{\text{DFT}}_{\text{sample}}(s_{k-1}, \text{token}_{k,j}) &= (L_k - j) \cdot c + c + \sum_{k'=k+1}^{T}(L_{k'} + 1) \cdot c \\
Q^{\text{DFT}}_d(s_k, a_k) &= \sum_{k'=k+1}^{T}(L_{k'} + 1) \cdot c
\end{aligned}
$}
\end{equation}
where $c$ is a constant independent of $\pi_\theta$.
\end{theorem}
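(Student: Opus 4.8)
The plan is to mirror the proof of the SFT theorem above, replacing the policy-entangled implicit reward $1/\pi_\theta$ by a policy-independent constant and re-running the same backward Q-value recursion. I would begin by invoking the preceding Lemma (Cancellation of $1/\pi$ weighting), which already gives $\nabla_\theta \mathcal{L}_{\text{DFT}} = \mathbb{E}_{\tau\sim\mathcal{D}}\big[\sum_{k,j}\nabla_\theta\pi_{\text{sample}}(\text{token}_{k,j}) + \sum_k \nabla_\theta\pi_d(a_k|s_k)\big]$. The conceptual step is then to read this off as a policy gradient $\sum_t (\text{coeff}_t)\cdot\nabla_\theta\pi_t$: for SFT the per-token coefficient was $1/\pi_t$ (precisely the source of the policy-entanglement in the SFT theorem), whereas the DFT rescaling by $\pi_t$ has exactly cancelled it, leaving a coefficient equal to a constant $c$ that does not depend on $\theta$. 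In the implicit-reward language of \cite{wu2025generalizationsftreinforcementlearning}, DFT therefore corresponds to an RL problem in which every sampling token and every decision action carries the \emph{same} policy-independent immediate reward $c$.

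Given that, I would reconstruct the implicit Q-values by the identical backward recursion used in the SFT proof, but with each occurrence of $1/\pi_{\text{sample}}(\cdot)$ and $1/\pi_d(\cdot)$ replaced by $c$. From the terminal state, $Q^{\text{DFT}}_d(s_T,\text{STOP}) = 0$; the last sampling step gives $Q^{\text{DFT}}_{\text{sample}}(s_{T-1},\text{token}_{T,j}) = (L_T - j)c + c$ (the $L_T - j$ remaining tokens of step $T$, plus the single terminal decision). Propagating backward via $Q^{\text{DFT}}_d(s_k,\text{RESAMPLE}) = Q^{\text{DFT}}_{\text{sample}}(s_k,\text{token}_{k+1,1})$ and $Q^{\text{DFT}}_{\text{sample}}(s_{k-1},\text{token}_{k,j}) = (L_k - j)c + c + Q^{\text{DFT}}_d(s_k,a_k)$, and telescoping the sum so that each future cycle $k'$ contributes exactly its $L_{k'}$ sampling tokens and one decision token, i.e. $(L_{k'}+1)c$, yields $Q^{\text{DFT}}_d(s_k,a_k) = \sum_{k'=k+1}^T (L_{k'}+1)c$ and $Q^{\text{DFT}}_{\text{sample}}(s_{k-1},\text{token}_{k,j}) = (L_k-j)c + c + \sum_{k'=k+1}^T(L_{k'}+1)c$, which are exactly the claimed closed forms. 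Since no factor of $\pi_\theta$ survives, both Q-values are manifestly policy-independent.

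The main obstacle is making the ``implicit reward $=c$'' correspondence precise at the token level: one must justify carefully that the DFT update really reduces to a constant policy-independent immediate reward rather than something still weakly $\theta$-dependent, including how the data-sampling density $p_{\mathcal{D}}$ and any normalization get absorbed into $c$, and checking that the stop-gradient treatment of the multiplicative $\pi_\theta$ factor (already used in the Lemma) is consistent with reading off a reward from the gradient. Once that correspondence is pinned down, the rest is the routine telescoping above. I would close with a short remark noting that, although $Q^{\text{DFT}}$ is now policy-independent so that a well-defined sufficient statistic $\Phi^{\text{DFT}}$ exists, the immediate contribution to $Q^{\text{DFT}}_{\text{sample}}$ is still $\Theta(L_k c)$ against $\Theta(c)$ for the decision action — so DFT achieves improved but not perfectly balanced gradient attribution, consistent with the footnote in Section 5.3.
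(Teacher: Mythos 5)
Your proposal is correct and follows essentially the same route as the paper's proof: invoke the $1/\pi$-cancellation lemma to identify a constant policy-independent per-token implicit reward $c$, then rerun the SFT backward recursion from $Q^{\text{DFT}}_d(s_T,\text{STOP})=0$ and telescope each future cycle into $(L_{k'}+1)c$. Your added caution about pinning down the ``implicit reward $=c$'' correspondence and your closing remark on the residual $\Theta(L_k)$ vs.\ $\Theta(1)$ asymmetry match what the paper states (the latter in its Section on remaining length asymmetry), so no substantive difference remains.
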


\begin{proof}
With the $1/\pi$ weighting cancelled, each token contributes constant implicit reward $c$. Working backwards:

At the final decision:
\begin{equation}
    Q^{\text{DFT}}_d(s_T, \text{STOP}) = 0
\end{equation}
At step $T$, token $j$:
\begin{equation}
    Q^{\text{DFT}}_{\text{sample}}(s_{T-1}, \text{token}_{T,j}) = (L_T - j) \cdot c + c
\end{equation}
where $(L_T - j) \cdot c$ accounts for remaining sampling tokens and $c$ accounts for the final decision. Recursively, for step $k < T$:
\begin{equation}
\resizebox{1.0\hsize}{!}{$
Q^{\text{DFT}}_d(s_k, \text{RESAMPLE}) = L_{k+1} \cdot c + c + Q^{\text{DFT}}_d(s_{k+1}, a_{k+1})
$}
\end{equation}
Expanding the recursion:
\begin{equation}
\resizebox{1.0\hsize}{!}{$
Q^{\text{DFT}}_d(s_k, a_k) = \sum_{k'=k+1}^{T} L_{k'} \cdot c + (T - k) \cdot c = \sum_{k'=k+1}^{T}(L_{k'} + 1) \cdot c
$}
\end{equation}
Crucially:
\begin{itemize}
    \item $Q^{\text{DFT}}_{\text{sample}}$ does not depend on $\pi_d$.
    \item $Q^{\text{DFT}}_d$ does not depend on $\pi_{\text{sample}}$.
\end{itemize}
The Q-values are now policy-independent.
\end{proof}

\subsection{DFT: Remaining Length Asymmetry}
Although DFT removes policy-entanglement, length asymmetry persists:
\begin{equation}
    Q^{\text{DFT}}_d(s_k, a_k) = c \cdot \sum_{k'=k+1}^{T}(L_{k'} + 1) \approx c \cdot \sum_{k'=k+1}^{T} L_{k'}
\end{equation}
Since $L_{k'} \gg 1$ (sampling sequences contain hundreds of tokens while decisions are single tokens), we have:
\begin{equation}
    Q^{\text{DFT}}_d \sim O\left(\sum_{k'} L_{k'}\right) \gg Q^{\text{DFT}}_{\text{sample},\text{per-token}} \sim O(1)
\end{equation}
The decision policy gradient is weighted by accumulated future sequence lengths, even though each decision is a single token. This creates asymmetric regularization similar to (but weaker than) the KL penalty analyzed in Theorem 3.2.

\begin{table*}[t!]
\centering
\begin{tabular}{lccc}
\toprule
\textbf{Property} & \textbf{SFT} & \textbf{DFT} & \textbf{RL (Surrogate Reward)} \\
\midrule
$Q_d$ depends on $\pi_{\text{sample}}$ & Yes (via $1/\pi_{\text{sample}}$) & No & No \\
$Q_{\text{sample}}$ depends on $\pi_d$ & Yes (via $1/\pi_d$) & No & No \\
$Q_d$ depends on future lengths $L_{k'}$ & Yes (weighted by $1/\pi$) & Yes (constant weight) & No \\
Sufficient statistic & $\Phi^{\text{SFT}}_{\text{sample}} \neq \Phi^{\text{SFT}}_d$ & $\Phi^{\text{DFT}}_{\text{sample}} \neq \Phi^{\text{DFT}}_d$ & $\Phi = \gamma^{\sum len} A_i$ (unified) \\
Gradient attribution & Imbalanced & Improved & Balanced \\
\bottomrule
\end{tabular}
\caption{Comparison of gradient attribution properties.}
\end{table*}

\subsection{Graphical Illustration}
We parameterize a minimal two-stage process with three learnable logits: $\theta_s$ governing sampling accuracy, $\theta_{d|C}$ governing stop decisions given correct answers, and $\theta_{d|W}$ governing resample decisions given incorrect answers.

\begin{figure*}[h!]
    \centering
    \includegraphics[width=0.9\linewidth]{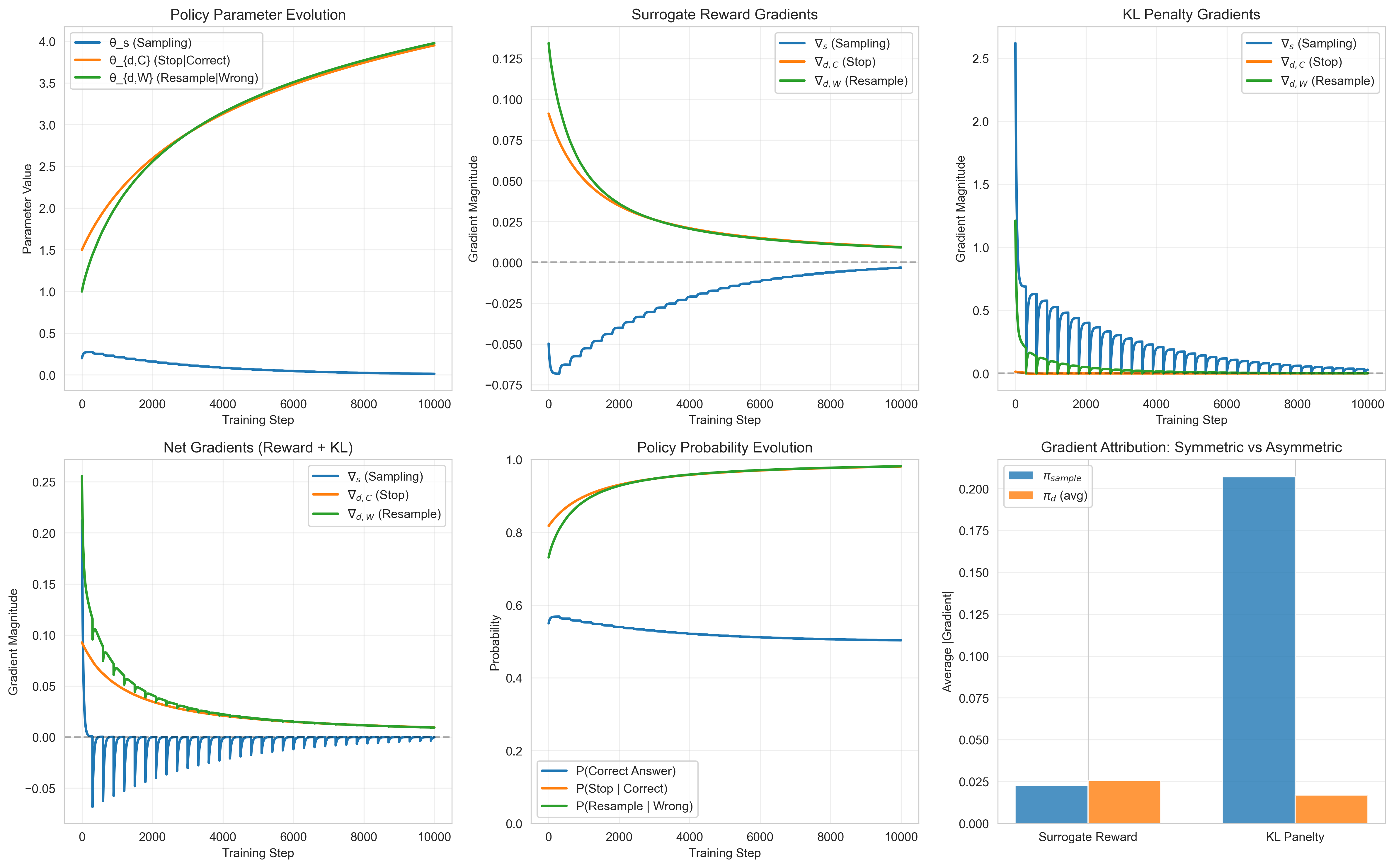}
    \caption{Single Parameter Illustration}
    \label{fig:simple_simulation}
\end{figure*}

Figure~\ref{fig:simple_simulation}\footnote{The cyclical pattern observed in the training process is because of reference model updating.} displays the training dynamics under the combined GRPO objective. The center panels decompose gradient contributions: the surrogate reward generates comparable gradient magnitudes across all parameters (center-left), while the KL penalty produces gradients for sampling that dominate decision gradients by nearly an order of magnitude (center-right). This confirms the $O(L_k)$ versus $O(1)$ asymmetry established above in the section. The bottom-right panel quantifies this directly—gradient attribution remains symmetric for the surrogate reward but exceeds 2:1 (sampling versus decision) for the KL penalty throughout training.

The net effect (bottom-left) is differential learning: $\pi_d$ parameters receive sustained positive gradients while $\pi_{\text{sample}}$ is heavily regularized toward the reference.


\section{Task Description and Experiment Hyperparameters}
\label{app:experiment_details}

\subsection{Task Description}

We evaluate models on multi-digit multiplication tasks, denoted as $m \times n$ where $m$ and $n$ indicate the number of digits in each operand. For example, a $3 \times 4$ task involves multiplying a 3-digit number by a 4-digit number (e.g., $123 \times 4567$). 

Each problem is presented to the model as a natural language query:
\begin{quote}
Calculate [operand1] * [operand2]. Think step by step.
\end{quote}
An answer is scored as correct if and only if the final numerical output exactly matches the ground truth product. Intermediate reasoning steps are not evaluated for correctness.

\subsection{Dataset Construction}

\paragraph{Training Data.} We construct 20,000 training examples for the in-distribution tasks ($4 \times 5$ and $5 \times 4$ multiplication). Operands are sampled uniformly at random within the specified digit range: for an $m$-digit operand, we sample integers from $[10^{m-1}, 10^m - 1]$. The training set is balanced equally between $4 \times 5$ and $5 \times 4$ tasks.

\paragraph{Test Data.} For each difficulty level ($3 \times 3$ through $3 \times 9$), we construct a test set of 100 examples using the same uniform sampling procedure. No filtering or balancing is applied for the results reported in Table~\ref{tab:accu_over_tasks}. For the calibration analysis in Figure~\ref{fig:calibration_comparison}, we exclude samples that exceed the 8,192 token generation limit.

\paragraph{SFT Data Variants.} We construct two SFT training sets:
\begin{itemize}
    \item \textbf{SFT (no reflection):} Each example consists of a query paired with a chain-of-thought solution leading directly to the answer, without retry patterns.
    \item \textbf{SFT (reflection):} Each example consists of a query paired with a trajectory that may include multiple attempts, explicit error detection, and correction steps before arriving at the final answer.
\end{itemize}

Trajectory generation procedures are provided in the replication repository.

\subsection{Model and Training Details}

All experiments use Qwen2.5-7B-Instruct \citep{qwen2025qwen25technicalreport} as the base model.

\paragraph{SFT Training.} Supervised fine-tuning is conducted using LLaMA-Factory \citep{zheng2024llamafactory}. Hyperparameters are as follows:
\begin{table}[H]
\centering
\begin{tabular}{ll}
\toprule
\textbf{Hyperparameter} & \textbf{Value} \\
\midrule
Optimizer & Adam \\
Batch size & 128 \\
Learning rate & $1 \times 10^{-5}$ \\
Epochs & 4.0 \\
Training examples & 20,000 \\
\bottomrule
\end{tabular}
\caption{SFT training hyperparameters.}
\label{tab:sft_hyperparams}
\end{table}

\paragraph{RL Training.} Reinforcement learning is conducted using verl \citep{sheng2024hybridflow} with Group Relative Policy Optimization (GRPO). Hyperparameters are as follows:
\begin{table}[H]
\centering
\begin{tabular}{ll}
\toprule
\textbf{Hyperparameter} & \textbf{Value} \\
\midrule
Algorithm & GRPO \\
Learning rate & $1 \times 10^{-6}$ \\
Train batch size & 512 \\
PPO mini-batch size & 128 \\
Rollouts per query ($n$) & 8 \\
KL loss coefficient & 0.001 \\
KL loss type & low\_var\_kl \\
Max prompt length & 1,024 tokens \\
Max response length & 8,192 tokens \\
Total epochs & 2 \\
Gradient checkpointing & Enabled \\
\bottomrule
\end{tabular}
\caption{RL training hyperparameters.}
\label{tab:rl_hyperparams}
\end{table}

\subsection{Evaluation Protocol}

All models are evaluated with the following settings:
\begin{itemize}
    \item \textbf{Sampling temperature:} 1.0
    \item \textbf{Number of attempts:} 1 (single generation per query)
    \item \textbf{Accuracy metric:} $\frac{\text{Number of correct answers}}{\text{Total number of questions}}$
\end{itemize}

\paragraph{Confidence Intervals.} Confidence intervals for observed accuracy in Table~\ref{tab:accu_over_tasks} are computed using the normal approximation to the binomial distribution at the 95\% level. Confidence intervals for predicted accuracy in Figure~\ref{fig:calibration_rl} and Figure~\ref{fig:calibration_sft} are obtained via bootstrapping: we resample the test set with replacement 100 times, compute the predicted accuracy for each bootstrap sample using the calibrated $p_s$, $p_{d|C}$, and $p_{d|W}$ parameters, and report the 2.5th and 97.5th percentiles of the resulting distribution.

\subsection{Compute Resources}

Each experiment is conducted on a cloud server equipped with a single NVIDIA H100 GPU.

\end{document}